\newtheorem{theorem}{Theorem}
\newtheorem{lemma}{Lemma}
\newtheorem{problem}{Problem}
\newtheorem{definition}{Definition}
\newtheorem{corollary}{Corollary}
\begin{document}

\title{On the uniqueness and stability of dictionaries \\ for sparse representation of noisy signals}

\author{Charles~J.~Garfinkle and Christopher~J.~Hillar \\
Redwood Center for Theoretical Neuroscience, Berkeley, CA, USA
}

\maketitle


\begin{abstract}
Learning optimal dictionaries for sparse coding has exposed characteristic sparse features of many natural signals. However, universal guarantees of the stability of such features in the presence of noise are lacking. Here, we provide very general conditions guaranteeing when dictionaries yielding the sparsest encodings are unique and stable with respect to measurement or modeling error. We demonstrate that some or all original dictionary elements are recoverable from noisy data even if the dictionary fails to satisfy the spark condition, its size is overestimated, or only a polynomial number of distinct sparse supports appear in the data. Importantly, we derive these guarantees without requiring any constraints on the recovered dictionary beyond a natural upper bound on its size. Our results also yield an effective procedure sufficient to affirm if a proposed solution to the dictionary learning problem is unique within bounds commensurate with the noise. We suggest applications to data analysis, engineering, and neuroscience and close with some remaining challenges left open by our work.
\end{abstract}


\section{Introduction}\label{Intro}
\IEEEPARstart{S}{parse} coding is a common modern approach to pattern analysis in signal processing whereby each of $N$ observed $n$-dimensional signal samples is viewed as a (noisy) linear combination of at most $k$ elementary waveforms drawn from some unknown ``dictionary" of size $m \ll N$ (see \cite{Zhang15} for a comprehensive review). 
Optimizing dictionaries subject to this and related sparsity constraints has revealed seemingly characteristic sparse structure in several signal classes of current interest (e.g., in vision \cite{wang2015sparse}). 

Of particular note are the seminal works in the field \cite{Olshausen96, hurri1996image, bell1997independent, van1998independent}, which discovered that dictionaries optimized for coding small patches of ``natural" images share qualitative similarities with linear filters estimated from response properties of simple-cell neurons in mammalian visual cortex. Curiously, these waveforms (e.g., ``Gabor'' wavelets) appear in dictionaries learned by a variety of algorithms trained over different natural image datasets, suggesting that learned features in natural signals may, in some sense, be canonical \cite{donoho2001can}.

Motivated by these discoveries and more recent work relating compressed sensing \cite{eldar2012compressed} to a theory of information transmission through random wiring bottlenecks in the brain \cite{Isely10}, we address when dictionaries for sparse representation are indeed identifiable from data. Answers to this question may also have implications in practice wherever an appeal is made to latent sparse structure of data (e.g., forgery detection \cite{hughes2010, olshausen2010applied}; brain recordings \cite{jung2001imaging, agarwal2014spatially, lee2016sparse}; and gene expression \cite{wu2016stability}). 

While several algorithms have been recently proposed to provably recover unique dictionaries under specific conditions (see \cite[Sec.~I-E]{Sun16} for a summary of the state-of-the-art), few theorems can be invoked to justify the consistency of inference under this model of data more broadly. To our knowledge, a universal guarantee of the uniqueness and stability of learned dictionaries and the sparse representations they induce over noisy data has yet to appear in the literature.

Here, we prove very generally that uniqueness and stability is a typical property of sparse dictionary learning. More specifically, we show that matrices injective on a sparse domain are identifiable from \mbox{$N = m(k-1){m \choose k} + m$} noisy linear combinations of $k$ of their $m$ columns up to an error that is linear in the noise (Thm.~\ref{DeterministicUniquenessTheorem}). In fact, provided $n \geq \min(2k,m)$, in almost all cases the problem is well-posed, as per Hadamard \cite{Hadamard1902}, given a sufficient amount of data (Thm.~\ref{robustPolythm} and Cor.~\ref{ProbabilisticCor}). 

Our guarantees also hold for a related (and perhaps more commonly posed, e.g. \cite{rehnsommer2007}) optimization problem seeking a dictionary minimizing the average number of elementary waveforms required to reconstruct each sample of the dataset (Thm.~\ref{SLCopt}). To practical benefit, our results impose no restrictions on learned dictionaries (e.g., that they, too, be injective over some sparse domain) beyond an upper bound on dictionary size, which is necessary in any case to avoid trivial solutions (e.g., allowing $m = N$). 

More precisely, let $\mathbf{A} \in \mathbb R^{n \times m}$ be a matrix with columns $\mathbf{A}_j$ ($j = 1,\ldots,m$) and let dataset $Z$ consist of measurements:
\begin{align}\label{LinearModel}
\mathbf{z}_i = \mathbf{A}\mathbf{x}_i + \mathbf{n}_i,\ \ \  \text{$i=1,\ldots,N$},
\end{align}
for $k$-\emph{sparse} $\mathbf{x}_i \in \mathbb{R}^m$ having at most $k<m$ nonzero entries and \emph{noise} $\mathbf{n}_i \in \mathbb{R}^n$, with bounded norm $\| \mathbf{n}_i \|_2 \leq  \eta$ representing our worst-case uncertainty in measuring the product $\mathbf{A}\mathbf{x}_i$. We first consider the following formulation of sparse coding.

\begin{problem}\label{InverseProblem}
Find a dictionary matrix $\mathbf{B}$ and $k$-sparse codes $\mathbf{\overline x}_1, \ldots, \mathbf{\overline x}_N$ that satisfy $\|\mathbf{z}_i - \mathbf{B}\mathbf{\overline x}_i\|_2 \leq \eta$ for all $i = 1,\ldots,N$.
\end{problem}

Note that every solution to Prob.~\ref{InverseProblem} represents infinitely many equivalent alternatives $\mathbf{BPD}$ and $\mathbf{D}^{-1}\mathbf{P}^{\top}\mathbf{\overline x}_1, \ldots, \mathbf{D}^{-1}\mathbf{P}^{\top}\mathbf{\overline x}_N$ parametrized by a choice of permutation matrix $\mathbf{P}$ and invertible diagonal matrix $\mathbf{D}$. 
Identifying these ambiguities (labelling and scale) yields a single orbit of solutions represented by any particular set of elementary waveforms (the columns of $\mathbf{B}$) and their associated sparse coefficients (the entries of $\mathbf{\overline x}_i$) that reconstruct each data point $\mathbf{z}_i$. 

Previous theoretical work addressing the noiseless case $\eta =0$ (e.g., \cite{li2004analysis, Georgiev05, Aharon06, Hillar15}) for matrices $\mathbf{B}$ having exactly $m$ columns has shown that a solution to Prob.~\ref{InverseProblem}, when it exists, is unique up to such relabeling and rescaling provided the $\mathbf{x}_i$ are sufficiently diverse and $\mathbf{A}$ satisfies the \textit{spark condition}:
\begin{align}\label{SparkCondition}
\mathbf{A}\mathbf{x}_1 = \mathbf{A}\mathbf{x}_2 \implies \mathbf{x}_1 = \mathbf{x}_2, \ \ \ \text{for all $k$-sparse } \mathbf{x}_1, \mathbf{x}_2,
\end{align}
which is necessary to guarantee the uniqueness of arbitrary $k$-sparse $\mathbf{x}_i$. We generalize these results to the practical setting  $\eta > 0$ by considering the following natural notion of stability with respect to measurement error.

\begin{definition}\label{maindef}
Fix $Y = \{ \mathbf{y}_1, \ldots, \mathbf{y}_N\} \subset \mathbb{R}^n$. We say $Y$ has a \textbf{$k$-sparse representation in $\mathbb{R}^m$} if there exists a matrix $\mathbf{A}$ and $k$-sparse $\mathbf{x}_1, \ldots, \mathbf{x}_N \in \mathbb{R}^m$ such that $\mathbf{y}_i = \mathbf{A}\mathbf{x}_i$ for all $i$. 
This representation is \textbf{stable} if for every $\delta_1, \delta_2 \geq 0$, there exists some $\varepsilon = \varepsilon(\delta_1, \delta_2)$ that is strictly positive for positive $\delta_1$ and $\delta_2$ such that if $\mathbf{B}$ and $k$-sparse $\mathbf{\overline x}_1, \ldots, \mathbf{\overline x}_N \in \mathbb{R}^m$ satisfy:
\begin{align*}
	\|\mathbf{A}\mathbf{x}_i - \mathbf{B}\mathbf{\overline x}_i\|_2 \leq \varepsilon(\delta_1, \delta_2),\ \   \text{for all $i=1,\ldots,N$},
\end{align*}
then there is some permutation matrix $\mathbf{P}$ and invertible diagonal matrix $\mathbf{D}$ such that for all $i, j$:
\begin{align}\label{def1}
\|\mathbf{A}_j - (\mathbf{BPD})_j\|_2 \leq \delta_1 \ \text{and} \ \|\mathbf{x}_i - \mathbf{D}^{-1}\mathbf{P}^{\top}\mathbf{\overline x}_i\|_1 \leq \delta_2.
\end{align}
\end{definition}

To see how Prob. \ref{InverseProblem} motivates Def. \ref{maindef}, suppose that $Y$ has a stable $k$-sparse representation in $\mathbb{R}^m$ and fix $\delta_1, \delta_2$ to be the desired accuracies of recovery in \eqref{def1}. Consider any dataset $Z$ generated as in \eqref{LinearModel} with $\eta \leq \frac{1}{2} \varepsilon(\delta_1, \delta_2)$. Using the triangle inequality, it follows that any $n \times m$ matrix $\mathbf{B}$ and $k$-sparse $\mathbf{\overline x}_1, \ldots, \mathbf{\overline x}_N$ solving Prob.~\ref{InverseProblem} are necessarily within $\delta_1$ and $\delta_2$ of the original dictionary $\mathbf{A}$ and codes $\mathbf{x}_1, \ldots, \mathbf{x}_N$, respectively.\footnote{We mention that the different norms in \eqref{def1} reflect the distinct meanings typically ascribed to the dictionary and sparse codes in modeling data.}

The main result of this work is a very general uniqueness theorem for sparse coding (Thm.~\ref{DeterministicUniquenessTheorem}) directly 
implying (Cor.~\ref{DeterministicUniquenessCorollary}), which guarantees that sparse representations of a dataset $Z$ are unique up to noise whenever generating dictionaries $\mathbf{A}$ satisfy a spark condition on supports and the original sparse codes $\mathbf{x}_i$ are sufficiently diverse (e.g., Fig.~\ref{noisyrecovery}).  Moreover, we provide an explicit, computable $\varepsilon(\delta_1, \delta_2)$ in (\ref{epsdel}) that is linear in desired accuracy $\delta_1$, and essentially so in $\delta_2$.

In the next section, we give formal statements of these findings.  We then extend the same guarantees (Thm.~\ref{SLCopt}) to the following alternate formulation of dictionary learning, which minimizes the total number of nonzero entries in sparse codes.

\begin{problem}\label{OptimizationProblem}
Find matrices $\mathbf{B}$ and vectors \mbox{$\mathbf{\overline x}_1, \ldots, \mathbf{\overline x}_N$} solving:
\begin{align}\label{minsum}
\min \sum_{i = 1}^N \|\mathbf{\overline x}_{i}\|_0 \ \
\text{subject to} \ \ \|\mathbf{z}_i - \mathbf{B}\mathbf{\overline x}_i\|_2 \leq \eta, \ \text{for all $i$}.
\end{align}
\end{problem}

Our development of Thm.~\ref{DeterministicUniquenessTheorem} is general enough to provide some uniqueness and stability even when generating $\mathbf{A}$ do not fully satisfy (\ref{SparkCondition}) and recovery dictionaries $\mathbf{B}$ have more columns than $\mathbf{A}$.  Moreover, the approach incorporates a combinatorial theory for designing generating codes that should be of independent interest. We also give brief arguments adapting our results to dictionaries and codes drawn from probability distributions (Cor.~\ref{ProbabilisticCor}). The technical proofs of Thms.~\ref{DeterministicUniquenessTheorem} and ~\ref{SLCopt} are deferred to Sec.~\ref{DUT}, following some necessary definitions and a fact in combinatorial matrix analysis (Lem.~\ref{MainLemma}; proven in the Appendix). Finally, we discuss in Sec.~\ref{Discussion} applications of our mathematical observations as well as open questions. 

\section{Results}\label{Results}


Precise statements of our results require that we first identify some combinatorial criteria on the supports\footnote{Recall that a vector $\mathbf{x}$ is said to be \emph{supported} in $S$ when $\mathbf{x} \in \text{\rm span}\{\mathbf{e}_j: j\in S\}$, with $\mathbf{e}_j$ forming the standard column basis.} of sparse vectors. Let $\{1, \ldots, m\}$ be denoted $[m]$, its power set $2^{[m]}$, and ${[m] \choose k}$ the set of subsets of $[m]$ of size $k$.  A \emph{hypergraph} on vertices $[m]$  is simply any subset $\mathcal{H} \subseteq 2^{[m]}$. We say that $\mathcal{H}$ is \textit{$k$-uniform} when $\mathcal{H} \subseteq {[m] \choose k}$. The \emph{degree} $\deg_\mathcal{H}(i)$ of a node $i \in [m]$ is the number of sets in $\mathcal{H}$ that contain $i$, and we say $\mathcal{H}$ is \emph{regular} when for some $r$ we have $\deg_\mathcal{H}(i) = r$ for all $i$ (given such an $r$, we say $\mathcal{H}$ is \textit{$r$-regular}). We also write $2\mathcal{H} := \{ S \cup S': S, S' \in \mathcal{H}\}$.  The following class of structured hypergraphs is a key ingredient in this work.

\begin{definition}\label{sip}
Given $\mathcal{H} \subseteq 2^{[m]}$, the \textbf{star} $\sigma(i)$ is the collection of sets in $\mathcal{H}$ containing $i$. We say $\mathcal{H}$ has the \textbf{singleton intersection property} (\textbf{SIP}) when $\cap \sigma(i) = \{i\}$ for all $i \in [m]$. 
\end{definition}

We next give a quantitative generalization of the spark condition (\ref{SparkCondition}) to combinatorial subsets of supports. The \emph{lower bound} of an $n \times m$ matrix $\mathbf{M}$ is the largest $\alpha$ with \mbox{$\|\mathbf{M}\mathbf{x}\|_2 \geq \alpha\|\mathbf{x}\|_2$} for all $\mathbf{x} \in \mathbb{R}^m$ \cite{Grcar10}. By compactness of the unit sphere, every injective linear map has a positive lower bound; hence, if $\mathbf{M}$ satisfies \eqref{SparkCondition}, then submatrices formed from $2k$ of its columns or less have strictly positive lower bounds. 

The lower bound of a matrix is generalized below in (\ref{Ldef}) by restricting it to the spans of certain submatrices\footnote{See \cite{vidal2005generalized} for an overview of the related ``union of subspaces" model.} associated with a hypergraph $\mathcal{H} \subseteq {[m] \choose k}$ of column indices. Let $\mathbf{M}_S$ denote the submatrix formed by the columns of a matrix $\mathbf{M}$ indexed by $S \subseteq [m]$ (setting $\mathbf{M}_\emptyset := \mathbf{0}$).  In the sections that follow, we shall also let $\bm{\mathcal{M}}_S$ denote the column-span of a submatrix $\mathbf{M}_S$, and $\bm{\mathcal{M}}_\mathcal{G}$ to denote $\{\bm{\mathcal{M}}_S\}_{S \in \mathcal{G}}$.  We define:  
%
\begin{align}\label{Ldef}
L_\mathcal{H}(\mathbf{M}) := \min \left\{ \frac{\|\mathbf{M}_S\mathbf{x}\|_2}{ \sqrt{k} \|\mathbf{x}\|_2} : S \in \mathcal{H}, \ \mathbf{0} \neq \mathbf{x} \in \mathbb{R}^{|S|} \right\},
\end{align} 
writing also $L_{k}$ in place of $L_\mathcal{H}$ when $\mathcal{H} = {[m] \choose k}$.\footnote{In compressed sensing literature, \mbox{$1 - \sqrt{k} L_k(\mathbf{M})$}  is the asymmetric lower restricted isometry constant for $\mathbf{M}$ with unit $\ell_2$-norm columns \cite{Blanchard2011}.}  As explained above, compactness implies that $L_{2k}(\mathbf{M}) > 0$ for all $\mathbf{M}$ satisfying \eqref{SparkCondition}. Clearly, $L_{\mathcal{H}'}(\mathbf{M}) \geq L_\mathcal{H}(\mathbf{M})$ whenever $\mathcal{H}' \subseteq \mathcal{H}$, and similarly any $k$-uniform $\mathcal{H}$ satisfying $\cup \mathcal{H} = [m]$ has $L_2 \geq L_{2\mathcal{H}} \geq L_{2k}$ (letting $L_{2k}$ := $L_m$ whenever $2k > m$).

We are now in a position to state our main result, though for expository purposes we leave the quantity $C_1$ 
undefined until Sec.~\ref{DUT}. All results  below assume real matrices and vectors. 


\begin{theorem}\label{DeterministicUniquenessTheorem}
If an $n \times m$ matrix $\mathbf{A}$ satisfies $L_{2\mathcal{H}}(\mathbf{A}) > 0$ for some $r$-regular $\mathcal{H} \subseteq {[m] \choose k}$ with the SIP, and $k$-sparse \mbox{$\mathbf{x}_1, \ldots, \mathbf{x}_N \in \mathbb{R}^m$} include more than $(k-1){\overline m \choose k}$ vectors in general linear position\footnote{Recall that a set of vectors sharing support $S$ are in \emph{general linear position} when any $|S|$ of them are linearly independent.} supported in each $S \in \mathcal{H}$, then the following recovery guarantees hold for $C_1 > 0$ given by \eqref{Cdef1}.

\textbf{Dictionary Recovery:} Fix $\varepsilon < L_{2}(\mathbf{A}) / C_1$.\footnote{Note that the condition $\varepsilon < L_2(\mathbf{A}) /C_1$ is necessary; otherwise, with \mbox{$\mathbf{A}$ = $\mathbf{I}$} (the identity matrix) and $\mathbf{x}_i = \mathbf{e}_i$, the matrix $\mathbf{B} = \left[\mathbf{0}, \frac{1}{2}(\mathbf{e}_1 + \mathbf{e}_2), \mathbf{e}_3, \ldots, \mathbf{e}_{m} \right]$ and sparse codes $\mathbf{\overline x}_i = \mathbf{e}_2$ for $i = 1, 2$ and $\mathbf{\overline x}_i = \mathbf{e}_i$ for $i \geq 3$ satisfy $\|\mathbf{A}\mathbf{x}_i - \mathbf{B}\mathbf{\overline x}_i \|_2 \leq \varepsilon$ but nonetheless violate \eqref{Cstable}.} If an $n \times \overline m$ matrix $\mathbf{B}$ has, for every $i \in [N]$, an associated $k$-sparse $\mathbf{\overline x}_i$ satisfying \mbox{$\|\mathbf{A}\mathbf{x}_i - \mathbf{B}\mathbf{\overline x}_i\|_2 \leq \varepsilon$}, then $\overline m \geq m$, and provided that $\overline m (r-1) < mr$, there is a permutation matrix $\mathbf{P}$ and an invertible diagonal matrix $\mathbf{D}$ such that:
\begin{align}\label{Cstable}
\|\mathbf{A}_j- (\mathbf{BPD})_j\|_2 \leq C_1 \varepsilon, \ \ \text{for all } j \in J,
\end{align}
for some $J \subseteq [m]$ of size \mbox{$m - (r-1)(\overline m - m)$}. 

\textbf{Code Recovery:} If, moreover, $\mathbf{A}_J$ satisfies \eqref{SparkCondition} and $\varepsilon < L_{2k}(\mathbf{A}_J) / C_1$, then $(\mathbf{BP})_J$ also satisfies \eqref{SparkCondition} with $L_{2k}(\mathbf{BP}_J) \geq (L_{2k}(\mathbf{A}_J) - C_1 \varepsilon) / \|\mathbf{D}_J\|_1$, and for all $i \in [N]$:
\begin{align}\label{b-PDa}
\|(\mathbf{x}_i)_J - (\mathbf{D}^{-1}\mathbf{P}^{\top} \mathbf{\overline x}_i)_J\|_1 &\leq  \left( \frac{ 1+C_1 \|(\mathbf{x}_i)_J\|_1 }{ L_{2k}(\mathbf{A}_J) -  C_1\varepsilon } \right) \varepsilon,
\end{align}
where subscript $(\cdot)_J$ here represents the subvector formed from restricting to coordinates indexed by $J$.
\end{theorem}

In words, Thm.~\ref{DeterministicUniquenessTheorem} says that the smaller the regularity $r$ of the original support hypergraph $\mathcal{H}$ or the difference $\overline m - m$ between the assumed and actual number of elements in the latent dictionary, the more columns and coefficients of the original dictionary $\mathbf{A}$ and sparse codes $\mathbf{x}_i$ are guaranteed to be contained (up to noise) in the appropriately labelled and scaled recovered dictionary $\mathbf{B}$ and codes $\mathbf{\overline x}_i$, respectively. 

In the important special case when $\overline m = m$, the theorem directly implies that  $Y = \{\mathbf{Ax}_1, \ldots, \mathbf{Ax}_N\}$ has a stable $k$-sparse representation in $\mathbb{R}^m$, with inequalities \eqref{def1} guaranteed in Def.~\ref{maindef} for the following worst-case error $\varepsilon$: 
\begin{align}\label{epsdel}
\varepsilon(\delta_1, \delta_2) := \min \left\{ \frac{\delta_1}{ C_1 }, \frac{ \delta_2 L_{2k}(\mathbf{A})}{ 1 + C_1 \left( \delta_2 + \max_{i \in [N]} \|\mathbf{x}_i\|_1  \right) } \right\}.
\end{align}

Since sparse codes in general linear position are straightforward to produce with a ``Vandermonde''  construction (i.e., by choosing columns of the matrix $[\gamma_{i}^j]_{i,j=1}^{k,N}$, for distinct nonzero $\gamma_i$), we have the following direct consequence of Thm.~\ref{DeterministicUniquenessTheorem}.

\begin{corollary}\label{DeterministicUniquenessCorollary}
Given any regular hypergraph $\mathcal{H} \subseteq {[m] \choose k}$ with the SIP, there are $N =  |\mathcal{H}| \left[ (k-1){m \choose k} + 1  \right]$ vectors \mbox{$\mathbf{x}_1, \ldots, \mathbf{x}_N \in \mathbb{R}^m$} such that every matrix $\mathbf{A}$ satisfying spark condition \eqref{SparkCondition} generates $Y = \{\mathbf{A}\mathbf{x}_1, \ldots, \mathbf{A}\mathbf{x}_N\}$ with a stable $k$-sparse representation in $\mathbb{R}^m$ for $\varepsilon(\delta_1,\delta_2)$ given by \eqref{epsdel}.
\end{corollary}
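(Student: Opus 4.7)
The corollary is essentially a direct application of Theorem~\ref{DeterministicUniquenessTheorem} in the special case $\overline m = m$, so my plan is to exhibit an explicit set of $N$ codes satisfying the theorem's hypotheses and then read off the stability conclusion.

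\textbf{Construction of the codes.} For each support $S \in \mathcal{H}$ I will produce $N_0 := (k-1)\binom{m}{k}+1$ codes supported on $S$ and in general linear position, via the Vandermonde recipe hinted at in the text. Fix $N_0$ distinct nonzero scalars $\gamma_1, \ldots, \gamma_{N_0}$; for each $S = \{s_1, \ldots, s_k\} \in \mathcal{H}$ and $i \in [N_0]$, let $\mathbf{x}_i^{(S)} \in \mathbb{R}^m$ have $s_j$-entry equal to $\gamma_i^j$ and all other entries zero. Any $k$ of these vectors (for fixed $S$) assemble into a $k \times k$ Vandermonde matrix whose determinant is nonzero by distinctness of the $\gamma_i$, so they are linearly independent---i.e., they lie in general linear position. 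Concatenating across $S \in \mathcal{H}$ yields exactly $N = |\mathcal{H}|\, N_0$ codes, with each $S \in \mathcal{H}$ carrying $N_0 > (k-1)\binom{m}{k}$ codes in general linear position supported in $S$.

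\textbf{Checking the remaining hypotheses.} The hypergraph $\mathcal{H}$ is assumed regular and to have the SIP; the SIP forces $\cup \mathcal{H} = [m]$, since $\cap \sigma(i) = \{i\}$ requires $\sigma(i)$ nonempty for every $i$. Combined with the observation $L_{2\mathcal{H}}(\mathbf{A}) \geq L_{2k}(\mathbf{A}) > 0$ recorded in Sec.~\ref{Results}, this yields $L_{2\mathcal{H}}(\mathbf{A}) > 0$. With $\overline m = m$, the side condition $(r-1)\overline m < mr$ reduces to $(r-1)m < rm$, which is automatic, and the sets $J, \overline J$ promised by the theorem both have full size $m$. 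Theorem~\ref{DeterministicUniquenessTheorem} then delivers \eqref{Cstable} and \eqref{b-PDa} for every $\varepsilon < L_{2k}(\mathbf{A})/C_1$.

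\textbf{Reading off stability and main obstacle.} Plugging $\varepsilon = \varepsilon(\delta_1,\delta_2)$ from \eqref{epsdel} into \eqref{Cstable} and \eqref{b-PDa} makes their right-hand sides at most $\delta_1$ and $\delta_2$, respectively, which is exactly \eqref{def1} in Definition~\ref{maindef}; since \eqref{epsdel} is strictly positive whenever $\delta_1, \delta_2 > 0$, the representation is stable in the required sense. The only non-bookkeeping step is the Vandermonde general-linear-position argument, which reduces to nonvanishing of square Vandermonde determinants, so I expect no substantive obstacle---the main theorem does the real work, and the corollary merely certifies that its hypotheses are not vacuous.
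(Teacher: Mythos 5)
Your proposal is correct and takes essentially the same route as the paper: the paper treats the corollary as a direct application of Thm.~\ref{DeterministicUniquenessTheorem} with $\overline m = m$, using the same Vandermonde construction (columns of $[\gamma_i^j]$ for distinct nonzero $\gamma_i$) to supply, for each $S \in \mathcal{H}$, more than $(k-1)\binom{m}{k}$ codes in general linear position, and then reading off Def.~\ref{maindef} with $\varepsilon(\delta_1,\delta_2)$ as in \eqref{epsdel}. Your extra bookkeeping (SIP $\Rightarrow \cup\mathcal{H}=[m]$, hence $L_{2\mathcal{H}}(\mathbf{A}) \geq L_{2k}(\mathbf{A}) > 0$, and the trivial check of $(r-1)\overline m < mr$ when $\overline m = m$) matches remarks already made in Sec.~\ref{Results} of the paper.
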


One can easily verify that for every $k < m$ there are regular $k$-uniform hypergraphs $\mathcal{H}$ with the SIP besides the obvious $\mathcal{H} = {[m] \choose k}$. For instance, take $\mathcal{H}$ to be the $k$-regular set of consecutive intervals of length $k$ in some cyclic order on $[m]$. In this case, a direct consequence of Cor.~\ref{DeterministicUniquenessCorollary} is rigorous verification of the lower bound \mbox{$N = m(k-1){m \choose k} + m$} for sufficient sample size from the introduction. Special cases allow for even smaller hypergraphs. For example, if $k = \sqrt{m}$, then a 2-regular $k$-uniform hypergraph with the SIP can be constructed as the $2k$ rows and columns formed by arranging the elements of $[m]$ into a square grid.

We should stress here that framing the problem in terms of hypergraphs has allowed us to show, unlike in previous research on the subject, that the matrix $\mathbf{A}$ need not necessarily satisfy \eqref{SparkCondition} to be recoverable from data. As an example, let $\mathbf{A} = [ \mathbf{e}_1, \ldots, \mathbf{e}_5, \mathbf{v}]$ with $\mathbf{v} = \mathbf{e}_1 + \mathbf{e}_3 + \mathbf{e}_5$ and take $\mathcal{H}$ to be all consecutive pairs of indices $1, \ldots ,6$ arranged in cyclic order. Then for $k=2$, the matrix $\mathbf{A}$ fails to satisfy \eqref{SparkCondition} while still obeying the assumptions of Thm.~\ref{DeterministicUniquenessTheorem} for dictionary recovery.

A practical implication of Thm.~\ref{DeterministicUniquenessTheorem} is the following: there is an effective procedure sufficient to affirm if a proposed solution to Prob.~\ref{InverseProblem} is indeed unique (up to noise and inherent ambiguities). One need simply check that the matrix and codes satisfy the (computable) assumptions of Thm.~\ref{DeterministicUniquenessTheorem} on $\mathbf{A}$ and the $\mathbf{x}_i$. In general, however, there is no known efficient procedure. We defer a brief discussion on this point to the next section.

A less direct consequence of Thm.~\ref{DeterministicUniquenessTheorem} is the following uniqueness and stability guarantee for solutions to Prob.~\ref{SLCopt}.

\begin{theorem}\label{SLCopt}
Fix a matrix $\mathbf{A}$ and vectors $\mathbf{x}_i$ satisfying the assumptions of Thm.~\ref{DeterministicUniquenessTheorem}, only now with over \mbox{$(k-1)\left[ {\overline m \choose k} + |\mathcal{H}|k{\overline m \choose k-1}\right]$} vectors supported in general linear position in each $S \in \mathcal{H}$. Every solution to Prob.~\ref{OptimizationProblem} (with $\eta = \varepsilon/2$) satisfies recovery guarantees \eqref{Cstable} and \eqref{b-PDa} when the corresponding bounds on $\eta$ are met.
\end{theorem}

\begin{figure}
\begin{center}
\includegraphics[width=.24 \linewidth]{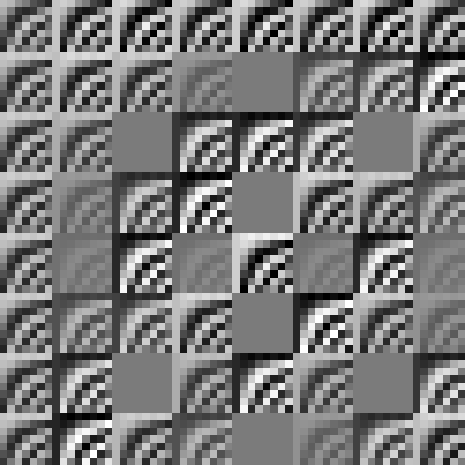}
\includegraphics[width=.24 \linewidth]{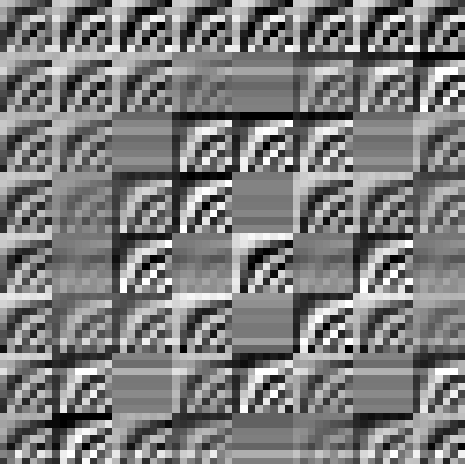}
\includegraphics[width=.24 \linewidth]{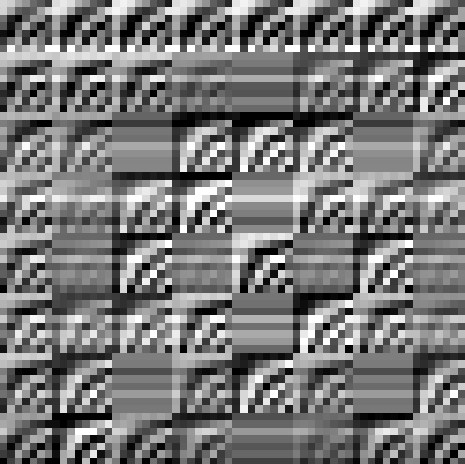}
\includegraphics[width=.24 \linewidth]{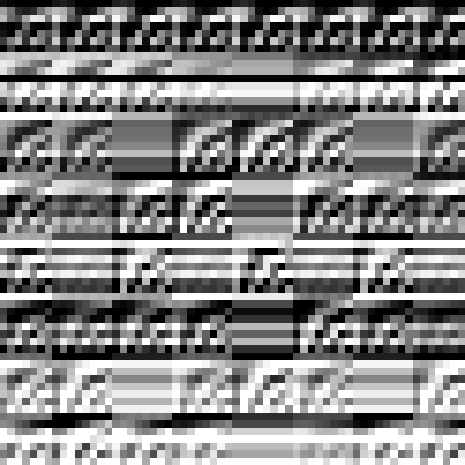}
\caption{\textbf{Learning a dictionary from increasingly noisy data}. The (unraveled) basis elements of the $8 \times 8$ discrete cosine transform (DCT) form the 64 columns of the left-most matrix above. Three increasingly imprecise dictionaries (columns reordered to best match original) are recovered by FastICA \cite{hyvarinen2000independent} trained on data generated from $8$-sparse linear combinations of DCT elements corrupted with additive noise (increasing from left to right).}
\vspace{-.6 cm}
\label{noisyrecovery}
\end{center}
\end{figure}

Another extension of Thm.~\ref{DeterministicUniquenessTheorem} can be derived from the following algebraic characterization of the spark condition.  Letting $\mathbf{A}$ be the $n \times m$ matrix of $nm$ indeterminates $A_{ij}$, the reader may work out why substituting real numbers for the $A_{ij}$ yields a matrix satisfying \eqref{SparkCondition} if and only if the following polynomial evaluates to a nonzero number:
\begin{align*}
f(\mathbf{A}) := \prod_{S \in {[m] \choose 2k}} \sum_{S' \in {[n] \choose 2k}} (\det \mathbf{A}_{S',S})^2,
\end{align*}
where for any $S' \in {[n] \choose 2k}$ and $S \in {[m] \choose 2k}$, the symbol $\mathbf{A}_{S',S}$ denotes the submatrix of entries $A_{ij}$ with $(i,j) \in S' \times S$.\footnote{The large number of terms in this product is likely necessary given that deciding whether or not a matrix satisfies the spark condition is NP-hard \cite{tillmann2014computational}.}

Since $f$ is analytic, having a single substitution of a real matrix $\mathbf{A}$ satisfying $f(\mathbf{A}) \neq 0$ implies that the zeroes of $f$ form a set of (Borel) measure zero. Such a matrix is easily constructed by adding rows of zeroes to a $\min(2k,m) \times m$ Vandermonde matrix as mentioned previously, so that every sum in the product defining $f$ above is strictly positive. Thus, almost every $n \times m$ matrix with $n \geq \min(2k,m)$ satisfies \eqref{SparkCondition}.

%


We claim that a similar phenomenon applies to datasets of vectors with a stable sparse representation. Briefly, following the same procedure as in \cite[Sec.~IV]{Hillar15}, for $k < m$ and $n \geq \min(2k, m)$, we may consider the ``symbolic'' dataset $Y = \{\mathbf{A}\mathbf{x}_1,\ldots,\mathbf{A} \mathbf{x}_N\}$ generated by an indeterminate $n \times m$ matrix $\mathbf{A}$ and $m$-dimensional $k$-sparse vectors $\mathbf{x}_1, \ldots, \mathbf{x}_N$ indeterminate within their supports, which form a regular hypergraph $\mathcal{H} \subseteq {[m] \choose k}$ satisfying the SIP. Restricting \mbox{$(k-1){m \choose k} + 1$} indeterminate $\mathbf{x}_i$ to each support in $\mathcal{H}$, and letting $\textbf{M}$ be the $n \times N$ matrix with columns $\mathbf{A}\mathbf{x}_i$, it can be checked that when $f(\mathbf{M}) \neq 0$ for a substitution of real numbers for the indeterminates, all of the assumptions on $\mathbf{A}$ and the $\mathbf{x}_i$ in Thm.~\ref{DeterministicUniquenessTheorem} are satisfied. We therefore have the following.  

\begin{theorem}\label{robustPolythm}
There is a polynomial in the entries of $\mathbf{A}$ and the $\mathbf{x}_i$ that evaluates to a nonzero number only when $Y$ has a stable $k$-sparse representation in $\mathbb{R}^m$. In particular, almost all substitutions impart to $Y$ this property.
\end{theorem}

To extend this observation to arbitrary probability distributions, note that if a set of $p$ measure spaces has all measures absolutely continuous with respect to the standard Borel measure on $\mathbb{R}$, then the product measure is also absolutely continuous with respect to the standard Borel product measure on $\mathbb{R}^p$ (e.g., see \cite{folland2013real}).  This fact combined with Thm.~\ref{robustPolythm} implies the following.\footnote{We refer the reader to \cite{Hillar15} for a more detailed explanation of these arguments.}

\begin{corollary}\label{ProbabilisticCor}
If the indeterminate entries of $\mathbf{A}$ and the $\mathbf{x}_i$ are drawn independently from probability distributions absolutely continuous with respect to the standard Borel measure, then $Y$ has a stable $k$-sparse representation in $\mathbb{R}^m$ with probability one.
\end{corollary}

Thus, drawing the dictionary and supported sparse coefficients from any continuous probability distribution almost always generates data with a stable sparse representation.


We close this section with some comments on the optimality of our results.  The linear scaling for $\varepsilon$ in \eqref{epsdel} is essentially optimal (e.g., see \cite{arias2013fundamental}), but a basic open problem remains: how many samples are necessary to determine the sparse coding model? 
Our results demonstrate that sparse codes $\mathbf{x}_i$ drawn from only a polynomial number of $k$-dimensional subspaces permit stable identification of the generating dictionary $\mathbf{A}$. 
This lends some legitimacy to the use of the model in practice, where data in general are unlikely (if ever) to exhibit the exponentially many possible $k$-wise combinations of dictionary elements required by (to our knowledge) all previously published results. 

Consequently, if $k$ is held fixed or if the size of the support set of reconstructing codes is polynomial in $\overline m$ and $k$, then a practical (polynomial) amount of data suffices to identify the dictionary.\footnote{In the latter case, a reexamination of the pigeonholing argument in the proof of Thm.~\ref{DeterministicUniquenessTheorem} requires a polynomial number of samples distributed over a polynomial number of supports.} Reasons to be skeptical that this holds in general, however, can be found in \cite{tillmann2014computational, Tillmann15}. Even so, in the next section we discuss how probabilistic guarantees can in fact be made for any number of available samples.

\section{Proofs}\label{DUT} 

We begin our proof of Thm.~\ref{DeterministicUniquenessTheorem} by showing how dictionary recovery \eqref{Cstable} already implies sparse code recovery \eqref{b-PDa} when $\mathbf{A}$ satisfies \eqref{SparkCondition} and \mbox{$\varepsilon < L_{2k}(\mathbf{A}) / C_1$}. We temporarily assume (without loss of generality) that $\overline m = m$, so as to omit an otherwise requisite subscript $(\cdot)_J$ around certain matrices and vectors. By definition of $L_{2k}$ in \eqref{Ldef}, and noting that $\sqrt{k} \|\mathbf{v}\|_2 \geq \|\mathbf{v}\|_1$ for $k$-sparse $\mathbf{v}$, we have for all $i \in [N[$:
\begin{align}\label{stuff}
\|\mathbf{x}_i - \mathbf{D}^{-1}\mathbf{P}^{\top}\mathbf{\overline x}_i \|_1 \nonumber
&\leq \frac{\|\mathbf{BPD}(\mathbf{x}_i - \mathbf{D}^{-1}\mathbf{P}^{\top}\mathbf{\overline x}_i)\|_2}{L_{2k}(\mathbf{BPD})} \\ \nonumber
&\leq \frac{\|( \mathbf{BPD} - \mathbf{A}) \mathbf{x}_i\|_2 + \|\mathbf{A}\mathbf{x}_i - \mathbf{B}\mathbf{\overline x}_i\|_2}{L_{2k}(\mathbf{BPD})} \\
&\leq \frac{C_1\varepsilon \|\mathbf{x}_i\|_1 + \varepsilon}{L_{2k}(\mathbf{BPD})},
\end{align}
where the first term in the numerator above follows from the triangle inequality and \eqref{Cstable}.

It remains for us to bound the denominator. For any $2k$-sparse $\mathbf{x}$, we have by the triangle inequality:
\begin{align*}
\|\mathbf{BPD}\mathbf{x}\|_2 
&\geq \|\mathbf{A}\mathbf{x}\|_2 - \|(\mathbf{A} - \mathbf{BPD})\mathbf{x}\|_2 \\
&\geq \sqrt{2k} (L_{2k}(\mathbf{A}) -  C_1\varepsilon) \|\mathbf{x}\|_2,
\end{align*}

%
We therefore have that $L_{2k}(\mathbf{BPD}) \geq L_{2k}(\mathbf{A}) - C_1\varepsilon  > 0$, and \eqref{b-PDa} then follows from \eqref{stuff}. The reader may also verify that $L_{2k}(\mathbf{BP}) \geq L_{2k}(\mathbf{BPD}) / \|\mathbf{D}\|_1$.

The heart of the matter is therefore \eqref{Cstable}, which we now establish beginning with the important special case of $k = 1$. 

\begin{proof}[Proof of Thm.~\ref{DeterministicUniquenessCorollary} for $k=1$]
Since the only 1-uniform hypergraph with the SIP is $[m]$, which is obviously regular, we require only $\mathbf{x}_i = c_i \mathbf{e}_i$ for $i \in [m]$, with $c_i \neq 0$ to guarantee  linear independence. While we have yet to define $C_1$ generally, in this case we may set $C_1 = 1/ \min_{\ell \in [m]} |c_{\ell}|$ so that $\varepsilon < L_2(\mathbf{A})  \min_{\ell \in [m]} |c_{\ell}|$. 

Fix $\mathbf{A} \in \mathbb{R}^{n \times m}$ satisfying $L_2(\mathbf{A}) > 0$, since here we have $2\mathcal{H} = {[m] \choose 2}$, and suppose some $\mathbf{B}$ and $1$-sparse $\mathbf{\overline x}_i \in \mathbb{R}^{\overline m}$ have  $\|\mathbf{A}\mathbf{x}_i - \mathbf{B}\mathbf{\overline x}_i\|_2 \leq \varepsilon < L_2(\mathbf{A}) / C_1$ for all $i$. Then, there exist $\overline{c}_1, \ldots, \overline{c}_m \in \mathbb{R}$ and a map $\pi: [m] \to [\overline m]$ such that:
\begin{align}\label{1D}
\|c_i\mathbf{A}_i - \overline{c}_i\mathbf{B}_{\pi(i)}\|_2 \leq \varepsilon,\ \ \text{for $i \in [m]$}.
\end{align} 
Note that $\overline{c}_i \neq 0$, since otherwise we would reach the following contradiction: $\|\mathbf{A}_i \|_2 \leq C_1 |c_i| \|\mathbf{A}_i \|_2  \leq C_1\varepsilon < L_2(\mathbf{A}) \leq L_1(\mathbf{A}) = \min_{i \in [m]} \|\mathbf{A}_{i}\|_2$. 

We now show that $\pi$ is injective (in particular, a permutation if $\overline m = m$). Suppose that $\pi(i) = \pi(j) = \ell$ for some $i \neq j$ and $\ell$. Then, $\|c_{i}\mathbf{A}_{i} - \overline{c}_{i} \mathbf{B}_{\ell}\|_2  \leq \varepsilon$ and $\|c_{j}\mathbf{A}_{j} - \overline{c}_{j}\mathbf{B}_{\ell}\|_2 \leq \varepsilon$, and we have: 
\begin{align*}
(|\overline{c}_{i}| + |\overline{c}_{j}|) \varepsilon
&\geq |\overline{c}_{i}| \|c_{j}\mathbf{A}_{j} - \overline{c}_{j}\mathbf{B}_{\ell}\|_2  + |\overline{c}_{j}| \|c_{i}\mathbf{A}_{i} - \overline{c}_{i} \mathbf{B}_{\ell}\|_2 \nonumber \\
&\geq \|\mathbf{A}(\overline{c}_{i}c_{j} \mathbf{e}_{j} - \overline{c}_{j}c_{i}\mathbf{e}_{i})\|_2 \nonumber \\ 
&\geq \sqrt{2}  L_2(\mathbf{A}) \|\overline{c}_{i}c_{j} \mathbf{e}_{j} - \overline{c}_{j}c_{i}\mathbf{e}_{i}\|_2 \nonumber \\
&\geq  L_2(\mathbf{A}) \left( |\overline{c}_{i}| + |\overline{c}_{j}| \right) \min_{\ell \in [m]} |c_\ell |,
\end{align*}
contradicting our assumed upper bound on $\varepsilon$. Hence, the map $\pi$ is injective and so $\overline m \geq m$. 

Letting $\mathbf{P}$ and $\mathbf{D}$ be the $\overline m \times \overline m$ permutation and invertible diagonal matrices with, respectively, columns $\mathbf{e}_{\pi(i)}$ and $\frac{\overline{c}_i}{c_i}\mathbf{e}_i$ for $i \in [m]$ (otherwise, $\mathbf{e}_{i}$ for $i \in [\overline{m}] \setminus [m]$), we may rewrite \eqref{1D} to see that for all $i \in [m]$:
\begin{align*}
\|\mathbf{A}_i - (\mathbf{BPD})_i\|_2 
= \|\mathbf{A}_i - \frac{\overline{c}_i}{c_i}\mathbf{B}_{\pi(i)}\|_2 
\leq \frac{\varepsilon}{|c_i|} 
\leq C_1\varepsilon.
\end{align*}
\end{proof}

An extension of the proof to the general case $k < m$ requires some additional tools to derive the general expression \eqref{Cdef1} for $C_1$. These include a generalized notion of distance (Def.~\ref{dDef}) and angle (Def.~\ref{FriedrichsDefinition}) between subspaces as well as a stability result in combinatorial matrix analysis (Lem.~\ref{MainLemma}). 

\begin{definition}\label{dDef}
For $\mathbf{u} \in \mathbb R^m$ and vector spaces $U,V \subseteq \mathbb{R}^m$, let $\text{\rm dist}(\mathbf{u}, V) := \min \{\| \mathbf{u}-\mathbf{v} \|_2: \mathbf{v} \in V\}$ and define:
\begin{align}
d(U,V) := \max_{\mathbf{u} \in U, \ \|\mathbf{u}\|_2 \leq 1} \text{\rm dist}(\mathbf{u},V).
\end{align}
\end{definition}

We note the following facts about $d$. Clearly, 
\begin{align}\label{UsubU}
U' \subseteq U \implies d(U',V) \leq d(U,V).
\end{align}
From \cite[Ch.~4 Cor.~2.6]{Kato2013}, we also have: 
\begin{align}\label{dimLem}
d(U,V) < 1 \implies \dim(U) \leq \dim(V),
\end{align}
and from \cite[Lem.~3.2]{Morris10}:
\begin{align}\label{eqdim}
\dim(U) = \dim(V) \implies d(U,V) = d(V,U).
\end{align}

The following is our result in combinatorial matrix analysis; it contains most of the complexity in the proof of Thm.~\ref{DeterministicUniquenessTheorem}. 

\begin{lemma}\label{MainLemma}
If an $n \times m$ matrix $\mathbf{A}$ has $L_{2\mathcal{H}}(\mathbf{A}) > 0$ for some $r$-regular $\mathcal{H} \subseteq {[m] \choose k}$ with the SIP, then the following holds for $C_2 > 0$ given by \eqref{Cdef2}:

Fix $\varepsilon < L_2(\mathbf{A}) / C_2$. If for some $n \times \overline m$ matrix $\mathbf{B}$ and map $\pi: \mathcal{H} \mapsto {[\overline m] \choose k}$,
\begin{align}\label{GapUpperBound}
d(\bm{\mathcal{A}}_S, \bm{\mathcal{B}}_{\pi(S)}) \leq \varepsilon, \ \  \text{for $S \in \mathcal{H}$},
\end{align}
then $\overline m \geq m$, and provided $\overline m (r-1) < mr$, there is a permutation matrix $\mathbf{P}$ and invertible diagonal $\mathbf{D}$ such that:
\begin{align}\label{MainLemmaBPD}
\|\mathbf{A}_i - (\mathbf{B}\mathbf{PD})_i\|_2 \leq C_2 \varepsilon, \ \  \text{for } i \in J,
\end{align}
for some $J \subseteq [m]$ of size \mbox{$m - (r-1)(\overline m - m)$}.
\end{lemma}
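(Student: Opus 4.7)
The plan is a two-stage reduction: first, translate the subspace-proximity hypothesis \eqref{GapUpperBound} into an approximate combinatorial statement --- that $\pi$ does not decrease intersection sizes on $\mathcal{H}$ --- and second, use the regularity and SIP of $\mathcal{H}$ to extract from this an injection $[m]\hookrightarrow[\overline m]$ that serves as the permutation. Throughout, $L_{2\mathcal{H}}(\mathbf{A})>0$ plays two quantitative roles: for any $S, S' \in \mathcal{H}$ the columns $\mathbf{A}_{S \cup S'}$ are linearly independent, so $\bm{\mathcal{A}}_S \cap \bm{\mathcal{A}}_{S'} = \bm{\mathcal{A}}_{S \cap S'}$ with dimension $|S \cap S'|$; moreover, the minimum principal angle between any two such subspaces is bounded away from zero in terms of $L_{2\mathcal{H}}(\mathbf{A})$.

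For the first stage, \eqref{dimLem} applied to \eqref{GapUpperBound} gives $\dim \bm{\mathcal{B}}_{\pi(S)} = k$ for every $S \in \mathcal{H}$ provided $\varepsilon < 1$. A perturbation argument then pushes this to intersections: lifting a basis of $\bm{\mathcal{A}}_S \cap \bm{\mathcal{A}}_{S'}$ to nearby vectors in $\bm{\mathcal{B}}_{\pi(S)}$ and in $\bm{\mathcal{B}}_{\pi(S')}$, and correcting via the uniform angle lower bound coming from $L_{2\mathcal{H}}(\mathbf{A})$, produces common vectors in $\bm{\mathcal{B}}_{\pi(S)} \cap \bm{\mathcal{B}}_{\pi(S')}$, yielding
\begin{equation*}
\dim\bigl(\bm{\mathcal{B}}_{\pi(S)} \cap \bm{\mathcal{B}}_{\pi(S')}\bigr) \,\geq\, |S \cap S'|
\end{equation*}
once $\varepsilon < L_2(\mathbf{A})/C_2$ for an appropriate constant. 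After discarding redundant columns of $\mathbf{B}$ --- which does not weaken the conclusion --- the matching upper bound $\dim \bm{\mathcal{B}}_T \leq |T|$ holds, and hence $|\pi(S) \cap \pi(S')| \geq |S \cap S'|$. An iterative application of the same argument extends this to $\bigl|\bigcap_{S\in\mathcal{F}} \pi(S)\bigr| \geq \bigl|\bigcap_{S\in\mathcal{F}} S\bigr|$ for every $\mathcal{F} \subseteq \mathcal{H}$.

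For the second stage I invoke the SIP: for each $i \in [m]$, $\bigcap_{S \in \sigma(i)} \pi(S)$ is nonempty, so I can select a representative $\overline\pi(i)$ from it. A bipartite double-counting of incidences --- using $\sum_{S\in\mathcal{H}} |\pi(S)| = k|\mathcal{H}| = mr$, the fact that every such $\overline\pi(i)$ contributes at least $r$ to this total, and Hall-type reasoning to arbitrate when $|\bigcap_{S\in\sigma(i)} \pi(S)| > 1$ --- produces an injective selection on a subset $J \subseteq [m]$ whose image $\overline J \subseteq [\overline m]$ has size exactly $|J| = \overline m - r(\overline m - m)$. In particular $\overline m \geq m$, and the hypothesis $(r-1)\overline m < mr$ is exactly the inequality that keeps $|J| > 0$.

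The main obstacle, and likely the reason Lemma~\ref{MainLemma} is relegated to an appendix, is the quantitative column-level estimate \eqref{MainLemmaBPD}. For $j \in J$, $\mathbf{A}_j$ lies in every $\bm{\mathcal{A}}_S$ with $S \in \sigma(j)$, and by \eqref{GapUpperBound} each such subspace admits an $O(\varepsilon \|\mathbf{A}_j\|_2)$-close vector in $\bm{\mathcal{B}}_{\pi(S)}$; but collapsing these $r$ approximations to a single scalar multiple of $\mathbf{B}_{\overline\pi(j)}$ requires bounding how far the common near-intersection $\bigcap_{S \in \sigma(j)} \bm{\mathcal{B}}_{\pi(S)}$ deviates from $\mathrm{span}(\mathbf{B}_{\overline\pi(j)})$. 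I would attack this by iterating a two-subspace intersection-stability estimate $r$ times, with the Friedrichs angles between the pairs $(\bm{\mathcal{A}}_S, \bm{\mathcal{A}}_{S'})$ for $S, S' \in \sigma(j)$ supplying the uniform angle lower bound at each step --- these angles are themselves controlled by $L_{2\mathcal{H}}(\mathbf{A})$. The accumulated multiplicative constant defines $C_2 = C_2(\mathbf{A}, \mathcal{H}, r)$; choosing $\mathbf{D}_{jj}$ as the minimizer of $\|\mathbf{A}_j - c\,\mathbf{B}_{\overline\pi(j)}\|_2$ over $c \in \mathbb{R}$ and applying a final triangle inequality then delivers \eqref{MainLemmaBPD}.
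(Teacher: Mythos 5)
Your Stage~1 is where the argument breaks. The claimed inequality $\dim\bigl(\bm{\mathcal{B}}_{\pi(S)} \cap \bm{\mathcal{B}}_{\pi(S')}\bigr) \geq |S \cap S'|$ does not follow from \eqref{GapUpperBound}: proximity of subspaces does not force them to intersect. Concretely, with $\bm{\mathcal{A}}_S = \text{\rm span}(\mathbf{e}_1,\mathbf{e}_2)$ and $\bm{\mathcal{A}}_{S'} = \text{\rm span}(\mathbf{e}_2,\mathbf{e}_3)$ in $\mathbb{R}^4$, the subspaces $\text{\rm span}(\mathbf{e}_1,\mathbf{e}_2)$ and $\text{\rm span}(\mathbf{e}_3,\mathbf{e}_2+\delta\mathbf{e}_4)$ are within $O(\delta)$ of them yet meet only at the origin. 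The ``correction via the uniform angle lower bound coming from $L_{2\mathcal{H}}(\mathbf{A})$'' is not available: that quantity controls angles among the subspaces $\bm{\mathcal{A}}_S$, not among the $\bm{\mathcal{B}}_{\pi(S)}$, and Lem.~\ref{MainLemma} deliberately assumes nothing about $\mathbf{B}$ (that absence is its point). Even granting a large subspace intersection, the passage to index sets, $|\pi(S)\cap\pi(S')| \geq |S\cap S'|$, would need the spans of $\mathbf{B}$'s columns to intersect only along shared columns, a spark-type property of $\mathbf{B}$ that ``discarding redundant columns'' cannot supply (e.g., $\text{\rm span}(\mathbf{e}_1,\mathbf{e}_2)\cap\text{\rm span}(\mathbf{e}_1+\mathbf{e}_2,\mathbf{e}_3)$ is a line although no column is shared). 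Finally, the direction of your combinatorial input is the wrong one for the counting: lower bounds on $|\cap\,\pi(\mathcal{F})|$ do not prevent a single index of $[\overline m]$ from lying in more than $r$ of the sets $\pi(S)$, which is exactly what must be excluded to conclude $\overline m \geq m$ and $|J| = \overline m - r(\overline m - m)$; your incidence count $\sum_{S}|\pi(S)| = mr$ alone yields no contradiction when $\overline m < m$.

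The paper's proof runs in the opposite direction, pushing (near-)intersections from the $\mathbf{B}$ side into the $\mathbf{A}$ side, where the angles are controlled. It first shows $\dim\bm{\mathcal{B}}_{\pi(S)} = |S|$ for all $S\in\mathcal{H}$, so each $\mathbf{B}_{\pi(S)}$ has full column rank and \eqref{eqdim} lets the distance in \eqref{GapUpperBound} be flipped. Then for $\mathcal{G} \in {\mathcal{H} \choose r} \cup {\mathcal{H} \choose r+1}$ it bounds $d(\bm{\mathcal{B}}_{\cap \pi(\mathcal{G})}, \bm{\mathcal{A}}_{\cap \mathcal{G}})$: any unit vector of $\bm{\mathcal{B}}_{\cap\pi(\mathcal{G})} \subseteq \cap\,\bm{\mathcal{B}}_{\pi(\mathcal{G})}$ is $\varepsilon$-close to every $\bm{\mathcal{A}}_S$, $S\in\mathcal{G}$, and Lem.~\ref{DistanceToIntersectionLemma} --- whose constant $\xi(\bm{\mathcal{A}}_\mathcal{G})$ involves only the $\mathbf{A}$-subspaces --- places it within $C_2\varepsilon/\max_j\|\mathbf{A}_j\|_2 < 1$ of $\cap\,\bm{\mathcal{A}}_\mathcal{G} = \bm{\mathcal{A}}_{\cap\mathcal{G}}$ (Lem.~\ref{spanIntersectionLemma}); by \eqref{dimLem} this gives the \emph{upper} bound $|\cap\,\pi(\mathcal{G})| \leq |\cap\,\mathcal{G}|$, i.e.\ condition \eqref{cond}. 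That bound, fed into the purely combinatorial Lem.~\ref{NonEmptyLemma}, produces $\overline m \geq m$, the injection $j \mapsto \cap\,\pi(\sigma(j))$ on a set $J$ of the stated size, and single-column proximity $d(\bm{\mathcal{A}}_j, \bm{\mathcal{B}}_{\overline\pi(j)}) \leq C_2\varepsilon/\max_i\|\mathbf{A}_i\|_2$, after which the $k=1$ argument assembles $\mathbf{P}$ and $\mathbf{D}$ and yields \eqref{MainLemmaBPD}. Your final-stage idea (iterated two-subspace stability with Friedrichs angles on the $\mathbf{A}$ side) is close in spirit to Lem.~\ref{DistanceToIntersectionLemma}, but it must be applied to $\bm{\mathcal{B}}_{\cap\pi(\sigma(j))}$ rather than to $\cap_{S\in\sigma(j)}\bm{\mathcal{B}}_{\pi(S)}$, which under your hypotheses may well be trivial.
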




We present the constant $C_2$ (a function of $\mathbf{A}$ and $\mathcal{H}$) relative to a quantity used in \cite{Deutsch12} to analyze the convergence of the ``alternating projections" algorithm for projecting a point onto the intersection of subspaces. We incorporate this quantity into the following definition, which we refer to in our proof of Lem.~\ref{DistanceToIntersectionLemma} in the Appendix; specifically, we use it to bound the distance between a point and the intersection of subspaces given an upper bound on its distance from each subspace.

\begin{definition}\label{FriedrichsDefinition}
For a collection of real subspaces $\mathcal{V} = \{V_i\}_{i=1}^\ell$, define $\xi(\mathcal{V}) := 0$ when $|\mathcal{V}| = 1$, and otherwise:
\begin{align}\label{xi}
\xi^2(\mathcal{V}) := 1 -  \max \prod_{i=1}^{\ell-1} \sin^2  \theta \left(V_i, \cap_{j>i} V_j \right) ,
\end{align} 
where the maximum is taken over all ways of ordering 
the $V_i$ and the angle $\theta \in (0,\frac{\pi}{2}]$ is defined implicitly as \cite[Def.~9.4]{Deutsch12}:
\begin{align*}
\cos{\theta(U,W)} := \max\left\{ |\langle \mathbf{u}, \mathbf{w} \rangle|: \substack{ \mathbf{u} \in U \cap (U \cap W)^\perp, \ \|\mathbf{u}\|_2 \leq 1 \\ \mathbf{w} \in W \cap (U \cap W)^\perp, \  \|\mathbf{w}\|_2 \leq 1 } \right\}.
\end{align*}
\end{definition}
Note that $\theta \in (0,\frac{\pi}{2}]$ implies $0 \leq \xi < 1$, and that $\xi(\mathcal{V}') \leq \xi(\mathcal{V})$ when $\mathcal{V}' \subseteq \mathcal{V}$.\footnote{We acknowledge the counter-intuitive property: $\theta =  \pi/2$ when $U \subseteq W$.}  

The constant $C_2 > 0$ of Lem.~\ref{MainLemma} can now be expressed as:  
\begin{align}\label{Cdef2}
	C_2(\mathbf{A}, \mathcal{H}) := \frac{ (r+1) \max_{j \in [m]} \|\mathbf{A}_j\|_2}{ 1- \max_{\mathcal{G} \in {\mathcal{H} \choose r+1}} \xi( \bm{\mathcal{A}}_\mathcal{G} ) }.
\end{align}

We next define the constant $C_1 > 0$ of Thm.~\ref{DeterministicUniquenessTheorem} in terms of $C_2$. Given vectors $\mathbf{x}_1, \ldots, \mathbf{x}_N \in \mathbb{R}^m$, let $\mathbf{X}$ denote the $m \times N$ matrix with columns $\mathbf{x}_i$ and let $I(S)$ denote the set of indices $i$ for which $\mathbf{x}_i$ is supported in $S$.  We define:
\begin{align}\label{Cdef1}
C_1(\mathbf{A}, \mathcal{H}, \{\mathbf{x}_i\}_{i=1}^N) := \frac{ C_2(\mathbf{A}, \mathcal{H}) } { \min_{S \in \mathcal{H}} L_k(\mathbf{AX}_{I(S)}) }.
\end{align}
Given the assumptions of Thm.~\ref{DeterministicUniquenessTheorem} on $\mathbf{A}$ and the $\mathbf{x}_i$, this expression for $C_1$ is well-defined\footnote{\label{note1}To see this, fix $S \in \mathcal{H}$ and $k$-sparse $\mathbf{c}$. Using the definitions, we have $\|\mathbf{AX}_{I(S)}\mathbf{c}\|_2 \geq \sqrt{k} L_\mathcal{H}(\mathbf{A})\|\mathbf{X}_{I(S)}\mathbf{c}\|_2 \geq k L_\mathcal{H}(\mathbf{A}) L_k(\mathbf{X}_{I(S)})\|\mathbf{c}\|_2$. Thus, $L_k(\mathbf{AX}_{I(S)}) \geq \sqrt{k} L_\mathcal{H}(\mathbf{A}) L_k(\mathbf{X}_{I(S)}) > 0$, since $L_{\mathcal{H}}(\mathbf{A}) \geq L_{2\mathcal{H}}(\mathbf{A})> 0$ and $L_k(\mathbf{X}_{I(S)}) > 0$ by general linear position of the $\mathbf{x}_i$.} and yields an upper bound on $\varepsilon$ consistent with that proven sufficient in the case $k=1$ considered at the beginning of this section.\footnote{When $\mathbf{x}_i = c_i\mathbf{e}_i$, we have $C_2 \geq 2\|\mathbf{A}_i\|_2$ and the denominator in \eqref{Cdef1} becomes $\min_{i \in [m]} |c_i| \|\mathbf{A}_i\|_2$; hence, $C_1 \geq 2 / \min_{i \in [m]} |c_i|$.}

The practically-minded reader should note that the explicit constants $C_1$ and $C_2$ are effectively computable: the quantity $L_k$ may be calculated as the smallest singular value of a certain matrix, while the quantity $\xi$ involves computing ``canonical angles'' between subspaces, which reduces again to an efficient singular value decomposition. 
There is no known fast computation of $L_k$ in general, however, since even $L_{k} > 0$ is NP-hard \cite{tillmann2014computational}, although 
efficiently computable bounds have been proposed (e.g., via the ``mutual coherence" of a matrix \cite{donoho2003optimally}); alternatively, 
fixing $k$ yields polynomial complexity. Moreover, calculating $C_2$ requires an exponential number of queries to $\xi$ unless $r$ is held fixed, too (e.g., the ``cyclic order'' hypergraphs described above have $r=k$).  Thus, as presented, $C_1$ and $C_2$ are not efficiently computable in general.  

\begin{proof}[Proof of Thm.~\ref{DeterministicUniquenessCorollary} for $k < m$] 
We find a map $\pi: \mathcal{H} \to {[m] \choose k}$ for which the distance $d(\bm{\mathcal{A}}_S, \bm{\mathcal{B}}_{\pi(S)})$ is controlled by $\varepsilon$ for all $S \in \mathcal{H}$. Applying Lem.~\ref{MainLemma} then completes the proof.

Fix $S \in \mathcal{H}$. Since there are more than $(k-1){\overline m \choose k}$ vectors $\mathbf{x}_i$ supported in $S$, by the pigeonhole principle there must be some $\overline S \in {[\overline m] \choose k}$ and a set of $k$ indices $K \subseteq I(S)$ for which all $\mathbf{\overline x}_i$ with $i \in K$ are supported in $\overline S$.
It also follows\footnote{See footnote \ref{note1}.} from $L_{2\mathcal{H}}(\mathbf{A}) > 0$ and the general linear position of the $\mathbf{x}_i$ that $L_k(\mathbf{AX}_{K}) > 0$; that is, the columns of the $n \times k$ matrix $\mathbf{AX}_K$ form a basis for $\bm{\mathcal{A}}_S$. 

Fixing $\mathbf{y} \in \bm{\mathcal{A}}_S \setminus \{\mathbf{0}\}$, there then exists $\mathbf{c} = (c_1, \ldots, c_k) \in \mathbb{R}^k \setminus \{\mathbf{0}\}$ such that $\mathbf{y} = \mathbf{AX}_K\mathbf{c}$. Setting \mbox{$\mathbf{\overline{y}} = \mathbf{B\overline{X}}_K\mathbf{c}$, which is in $\bm{\mathcal{B}}_{\overline S}$}, we have by triangle inequality:
\begin{align*}
\|\mathbf{y} - \mathbf{\overline{y}}\|_2 
&= \|(\mathbf{AX}_K - \mathbf{B\overline{X}}_K)\mathbf{c}\|_2
\leq \varepsilon \|\mathbf{c}\|_1
\leq \varepsilon \sqrt{k}  \|\mathbf{c}\|_2  \\
&\leq \frac{\varepsilon}{L_k(\mathbf{AX}_K)} \|\mathbf{y}\|_2,
\end{align*}
where the last inequality uses \eqref{Ldef}. From Def.~\ref{dDef}:
\begin{align}\label{rhs222}
d(\bm{\mathcal{A}}_S, \bm{\mathcal{B}}_{\overline S}) 
\leq \frac{\varepsilon}{  L_k(\mathbf{AX}_{K}) } 
\leq \frac{\varepsilon}{  L_k(\mathbf{AX}_{I(S)}) } 
\leq \varepsilon \frac{C_1}{C_2}.
\end{align}
Finally, apply Lem.~\ref{MainLemma} with $\varepsilon < L_2(\mathbf{A})/C_1$ and $\pi(S) := \overline S$. 
\end{proof}

Before moving on to the proof of Thm.~\ref{SLCopt}, we briefly revisit our discussion on sample complexity from the end of the previous section. While an exponential number of samples may very well prove to be necessary in the deterministic or almost-certain case, our proof of Thm.~\ref{DeterministicUniquenessTheorem} can be extended to hold with some probability for \emph{any} number of samples by alternative appeal to a probabilistic pigeonholing at the point early in the proof where the (deterministic) pigeonhole principle is applied to show that for every $S \in \mathcal{H}$, there exist $k$ vectors $\mathbf{x}_i$ supported on $S$ whose corresponding $\mathbf{\overline x}_i$ all share the same support.\footnote{A famous example of such an argument is the counter-intuitive ``birthday paradox", which demonstrates that the probability of two people having the same birthday in a room of twenty-three is greater than 50\%.} 
Given insufficient samples, this argument has some less-than-certain probability of being valid for each $S \in \mathcal{H}$. Nonetheless, simulations with small hypergraphs confirm that the probability of success very quickly approaches one once the number of samples $N$ surpasses a small fraction of the deterministic sample complexity. 

\begin{proof}[Proof of Thm.~\ref{SLCopt}]
We bound the number of $k$-sparse $\mathbf{\overline x}_i$ from below and then apply Thm.~\ref{DeterministicUniquenessCorollary}. 
Let $n_p$ be the number of $\mathbf{\overline x}_i$ with $\|\mathbf{\overline x}_i\|_0 = p$.
Since the $\mathbf{x}_i$ are all $k$-sparse, by \eqref{minsum} we have:
\begin{align*}
\sum_{p=0}^{\overline m} p n_p =  \sum_{i=0}^N \|\mathbf{\overline x}_i\|_0
\leq \sum_{i=0}^N \|\mathbf{x}_i\|_0 
\leq kN
\end{align*}
Since $N = \sum_{p = 0}^{\overline m} n_p$, we then have $\sum_{p = 0}^{\overline m} (p-k) n_p \leq 0$. Splitting the sum yields:
\begin{align}\label{eqn}
\sum_{p = k+1}^{\overline m} n_p \leq \sum_{p = k+1}^{\overline m} (p-k) n_p \leq \sum_{p = 0}^k (k-p)n_p \leq k \sum_{p = 0}^{k-1} n_p,
\end{align}
demonstrating that the number of vectors $\mathbf{\overline x}_i$ that are \emph{not} $k$-sparse is bounded above by how many are $(k-1)$-sparse. 

Next, observe that no more than $(k-1)|\mathcal{H}|$ of the $\mathbf{\overline x}_i$ share a support $\overline S$ of size less than $k$. Otherwise, by the pigeonhole principle, there is some $S \in \mathcal{H}$ and a set of $k$ indices $K \subseteq I(S)$ for which all $\mathbf{x}_i$ with $i \in K$ are supported in $S$; as argued previously, \eqref{rhs222} follows. It is simple to show that $L_2(\mathbf{A}) \leq \max_j\|\mathbf{A}_j\|_2$, and since $0 \leq \xi < 1$, the right-hand side of \eqref{rhs222} is less than one for $\varepsilon < L_2(\mathbf{A})/C_1$. Thus, by \eqref{dimLem} we would have the contradiction $k = \dim(\bm{\mathcal{A}}_S) \leq \dim(\bm{\mathcal{B}}_{\overline S}) \leq |\overline S| < k.$ 

The total number of $(k-1)$-sparse vectors $\mathbf{\overline x}_i$ thus cannot exceed $|\mathcal{H}|(k-1){ \overline m \choose k-1}$. By \eqref{eqn}, no more than $|\mathcal{H}|k(k-1){ \overline m \choose k-1}$ vectors $\mathbf{\overline x}_i$ are not $k$-sparse. Since for every $S \in \mathcal{H}$ there are over $(k-1)\left[ {\overline m \choose k} + |\mathcal{H}|k{ \overline m \choose k-1} \right]$ vectors $\mathbf{x}_i$ supported there, it must be that more than $(k-1){\overline m \choose k}$ of them have corresponding $\mathbf{\overline x}_i$ that are $k$-sparse. The result now follows from Thm.~\ref{DeterministicUniquenessCorollary}, noting by the triangle inequality that $\|\mathbf{A}\mathbf{x}_i - \mathbf{B}\mathbf{\overline x}_i\| \leq 2\eta$ for $i = 1, \ldots, N$.
\end{proof}



\section{Discussion}\label{Discussion}

A main motivation for this work is the emergence of seemingly unique representations from sparse coding models trained on natural data, despite the varied assumptions underlying the many algorithms in current use. Our results constitute an important step toward explaining these phenomena as well as unifying many publications on the topic by deriving general deterministic conditions under which identification of parameters in this model is not only possible but also robust to uncertainty in measurement and model choice.

We have shown that, given sufficient data, the problem of determining a dictionary and sparse codes with minimal support size (Prob.~\ref{OptimizationProblem}) reduces to an instance of Prob.~\ref{InverseProblem}, to which our main result (Thm.~\ref{DeterministicUniquenessTheorem}) applies: every dictionary and sequence of sparse codes consistent with the data are equivalent up to inherent relabeling/scaling ambiguities and a discrepancy (error) that scales linearly with the measurement noise or modeling inaccuracy. The constants we provide are explicit and computable; as such, there is an effective procedure that sufficiently affirms if a proposed solution to these problems is indeed unique up to noise and inherent ambiguities, although it is not efficient in general.





Beyond an extension of existing noiseless guarantees \cite{Hillar15} to the noisy regime and their novel application to Prob.~\ref{OptimizationProblem}, our work contains a theory of combinatorial designs for support sets key to identification of dictionaries. We incorporate this idea into a fundamental lemma in matrix theory (Lem.~\ref{MainLemma}) that draws upon the definition of a matrix lower bound (\ref{Ldef}) induced by a hypergraph. The new insight offered by this combinatorial approach allows for guaranteed recovery of some or all dictionary elements even if: 1) dictionary size is overestimated, 2) data cover only a polynomial number of distinct sparse supports, and 3) dictionaries do not satisfy the spark condition. 

The absence of any assumptions about dictionaries solving Prob.~\ref{InverseProblem} was a major technical obstruction in proving Thm.~\ref{DeterministicUniquenessTheorem}. We sought such a general guarantee because of the practical difficulty in ensuring that an algorithm maintain a dictionary satisfying the spark condition \eqref{SparkCondition} at each iteration, an implicit requirement of all previous works except \cite{Hillar15}; indeed, even certifying a dictionary has this property is NP-hard \cite{tillmann2014computational}.

One direct application of this work is to theoretical neuroscience, wherein our theorems justify the mathematical soundness of one of the few hypothesized theories of bottleneck communication in the brain \cite{Isely10}: that sparse neural population activity is recoverable from its noisy linear compression through a randomly constructed (but unknown) wiring bottleneck by any biologically plausible unsupervised sparse coding method that solves Prob.~\ref{DeterministicUniquenessTheorem} or \ref{SLCopt} (e.g., \cite{rehnsommer2007, rozell2007neurally, pehlevan2015normative}).\footnote{We refer the reader to \cite{ganguli2012compressed} for more on interactions between dictionary learning and neuroscience.}

In fact, uniqueness guarantees with minimal assumptions apply to all areas of data science and engineering that utilize learned sparse structure. For example, several groups have applied compressed sensing to signal processing tasks; for instance, in MRI analysis \cite{lustig2008compressed}, image compression \cite{Duarte08}, and even the design of an ultrafast camera \cite{Gao14}. It is only a matter of time before these systems incorporate dictionary learning to encode and decode signals (e.g., in a device that learns structure from motion \cite{kong2016prior}), just as scientists have used sparse coding to make sense of their data \cite{jung2001imaging, agarwal2014spatially, lee2016sparse, wu2016stability}. 

Assurances offered by our theorems certify that different devices and algorithms learn equivalent representations given enough data from statistically identical systems.\footnote{To contrast with the current hot topic of ``Deep Learning'', there are few such uniqueness guarantees for these models of data; moreover, even small noise can dramatically alter their output \cite{goodfellow2014explaining}.} 
Indeed, a main reason for the sustained interest in dictionary learning as an unsupervised method for data analysis seems to be the assumed well-posedness of parameter identification in the model, confirmation of which forms the core of our findings.

We close with some challenges left open by our work. All conditions stated here guaranteeing the uniqueness and stability of sparse representations have only been shown sufficient; it remains open, therefore, to extend them to necessary conditions, be they on required sample size, the structure of support set hypergraphs, or tolerable error bounds. On this last note, we remark that our deterministic treatment considers always the ``worst-case" noise, whereas the ``effective" noise sampled from a concentrated distribution might be significantly reduced, especially for high-dimensional data. It would be of great practical benefit to see how drastically all conditions can be relaxed by requiring only probabilistic guarantees in this way, or in the spirit of our discussion on probabilistic pigeonholing to reduce sample complexity (as in the famous ``birthday paradox") following the proof of Thm.~\ref{DeterministicUniquenessTheorem}.

Another interesting question raised by our work is for which special cases is it efficient to check that a solution to Prob.~\ref{InverseProblem} or \ref{OptimizationProblem} is unique up to noise and inherent ambiguities. Considering that the sufficient conditions we have described for checking this in general are NP-hard to compute, are the necessary conditions hard? Are Probs.~\ref{InverseProblem} and \ref{OptimizationProblem} then also hard (e.g., see \cite{Tillmann15})? Finally, since Prob.~\ref{SLCopt} is intractable in general, but efficiently solvable by $\ell_1$-norm minimization when the matrix is known (and has a large enough lower bound over sparse domains \cite{eldar2012compressed}), is there a version of Thm.~\ref{SLCopt} certifying when Prob.~\ref{OptimizationProblem} can be solved efficiently in full by similar means?  

We hope these remaining challenges pique the interest of the community to pick up where we have left off and that the theoretical tools showcased here may be of use to this end.

\textbf{Acknowledgement.} We thank Fritz Sommer and Darren Rhea for early thoughts, and Ian Morris for posting \eqref{eqdim} online.

\bibliographystyle{IEEEtran}
\bibliography{chazthm_ieee_trans_sig}


\section{Appendix}\label{proofs}

We prove Lem.~\ref{MainLemma} after the following auxiliary lemmas.  



\begin{lemma}\label{spanIntersectionLemma}
If $f: V \to W$ is injective, then $f\left(\cap_{i=1}^\ell V_i \right) =  \cap_{i=1}^\ell f\left(V_i\right)$ for any $V_1, \ldots, V_\ell \subseteq V$. ($f(\emptyset):=\emptyset$.)
\end{lemma}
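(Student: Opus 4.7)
The plan is to prove the two inclusions separately, noting that one direction holds for arbitrary functions and the other is where injectivity is essential.

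First I would dispatch the inclusion $f\bigl(\cap_{i=1}^\ell V_i\bigr) \subseteq \cap_{i=1}^\ell f(V_i)$, which requires no hypothesis on $f$ at all (beyond being a function). Pick any $w \in f\bigl(\cap_{i=1}^\ell V_i\bigr)$, so $w = f(v)$ for some $v \in \cap_{i=1}^\ell V_i$. Then $v \in V_i$ for every $i$, hence $w = f(v) \in f(V_i)$ for every $i$, so $w \in \cap_{i=1}^\ell f(V_i)$. The empty-set convention $f(\emptyset) := \emptyset$ makes the boundary case vacuous.

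Next I would establish the reverse inclusion $\cap_{i=1}^\ell f(V_i) \subseteq f\bigl(\cap_{i=1}^\ell V_i\bigr)$, which is exactly where injectivity of $f$ is used. Let $w \in \cap_{i=1}^\ell f(V_i)$. Then for each $i$ there exists some $v_i \in V_i$ with $f(v_i) = w$. Injectivity of $f$ forces $v_1 = v_2 = \cdots = v_\ell =: v$, and this common preimage $v$ lies in $V_i$ for every $i$, i.e., $v \in \cap_{i=1}^\ell V_i$. Therefore $w = f(v) \in f\bigl(\cap_{i=1}^\ell V_i\bigr)$, as required.

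There is essentially no obstacle here — the lemma is a standard set-theoretic identity and the argument is two lines of chasing elements. The only subtle point is remembering that injectivity is what guarantees a single witness $v$ shared across all $i$, which is precisely what forces membership in the intersection on the domain side. The empty-intersection edge case is handled by the stated convention, so combining both inclusions completes the proof.
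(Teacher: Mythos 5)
Your proof is correct and is essentially the argument the paper has in mind: the paper merely remarks that by induction it suffices to treat the trivial case $\ell = 2$, and your two-inclusion element chase (with injectivity forcing a single common preimage) is exactly that standard argument, carried out directly for general $\ell$. No gaps; the empty-set convention is handled appropriately.
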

\begin{proof}
By induction, it is enough to prove the case $\ell = 2$. Clearly, for any map $f$, if $w \in f(U \cap V)$ then $w \in f(U)$ and $w \in f(V)$; hence, $w \in f(U) \cap f(V)$. If $w \in f(U) \cap f(V)$, then $w \in f(U)$ and $w \in f(V)$; thus, $w = f(u) = f(v)$ for some $u \in U$ and $v \in V$, implying $u = v$ by injectivity of $f$. It follows that $u \in U \cap V$ and $w \in f(U \cap V)$.
\end{proof}
In particular, if a matrix $\mathbf{A}$ satisfies $L_{2\mathcal{H}}(\mathbf{A}) > 0$, then taking $V$ to be the union of subspaces consisting of vectors with supports in $2\mathcal{H}$, we have $\bm{\mathcal{A}}_{\cap \mathcal{G}} = \cap \bm{\mathcal{A}}_\mathcal{G}$ for all $\mathcal{G} \subseteq \mathcal{H}$.

\begin{lemma}\label{DistanceToIntersectionLemma}
Let $\mathcal{V} = \{V_i\}_{i=1}^k$ be a set of two or more subspaces of $\mathbb{R}^m$, and set $V = \cap \mathcal{V} $. For  $\mathbf{u} \in \mathbb{R}^m$, we have (recall Defs.~\ref{dDef}~\&~\ref{FriedrichsDefinition}):
\begin{align}\label{DTILeq}
\text{\rm dist}(\mathbf{u}, V) \leq \frac{1}{1 - \xi(\mathcal{V})} \sum_{i=1}^k \text{\rm dist}(\mathbf{u}, V_i).
\end{align}
\end{lemma}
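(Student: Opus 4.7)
The plan is to combine a standard operator-norm bound from alternating-projection theory with an elementary telescoping identity, avoiding any direct induction on $k$. Write $P_j$ for the orthogonal projection onto $V_j$ and $P_V$ for that onto $V$. The key ingredient, which is exactly the substantive Friedrichs-angle fact carried out in \cite{Deutsch12}, is the operator-norm estimate
\[
\| P_1 P_2 \cdots P_k - P_V \|_{\mathrm{op}} \le \xi(\mathcal V).
\]
Because $V \subseteq V_j$ implies $P_j P_V = P_V$ for each $j$, the operator $P_1 \cdots P_k - P_V$ vanishes on $V$; applying it to the decomposition $\mathbf x = P_V \mathbf x + (I - P_V)\mathbf x$ sharpens the estimate to
\[
\| P_1 \cdots P_k \mathbf x - P_V \mathbf x \| \le \xi(\mathcal V) \, \text{\rm dist}(\mathbf x, V).
\]

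A single triangle inequality then reduces the lemma to bounding $\|\mathbf x - P_1 \cdots P_k \mathbf x\|$: after rearrangement one gets $(1 - \xi(\mathcal V)) \, \text{\rm dist}(\mathbf x, V) \le \|\mathbf x - P_1 \cdots P_k \mathbf x\|$. For this I would invoke the \emph{left-to-right} telescoping identity
\[
\mathbf x - P_1 P_2 \cdots P_k \mathbf x \;=\; \sum_{j=1}^{k} P_1 P_2 \cdots P_{j-1}\, (I - P_j)\, \mathbf x,
\]
which follows by an elementary induction on $k$ (the $k=2$ case reads $\mathbf x - P_1 P_2 \mathbf x = (I - P_1)\mathbf x + P_1 (I - P_2)\mathbf x$). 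Since each $P_i$ is a non-expansion, $\|P_1 \cdots P_{j-1}\|_{\mathrm{op}} \le 1$, so the $j$-th summand has norm at most $\|(I - P_j)\mathbf x\| = \text{\rm dist}(\mathbf x, V_j)$. Summing and combining with the previous step yields \eqref{DTILeq}.

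The main obstacle is the operator-norm estimate itself --- it is the deep fact, relying on a careful bookkeeping of Friedrichs angles across the optimal ordering that produces the product defining $\xi$; everything downstream is routine. One subtlety worth flagging: the symmetric \emph{right-to-left} telescoping $\mathbf x - P_1 \cdots P_k \mathbf x = \sum_j (I - P_j) P_{j+1} \cdots P_k \mathbf x$ does \emph{not} furnish the bound $\sum_j \text{\rm dist}(\mathbf x, V_j)$, because each summand equals $\text{\rm dist}(P_{j+1} \cdots P_k \mathbf x, V_j)$ and projecting onto one subspace can strictly increase the distance to another (a two-line computation with nearly parallel lines in $\mathbb R^2$ confirms this). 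A naive inductive attempt peeling off one subspace at a time using the base case (Bauschke--Borwein for $k=2$) would likewise fail to give the clean constant, since the multiplicative bound $1 - \xi(\mathcal V)^2 \le (1 - \cos^2\theta(V_1,\cap_{j\ge 2}V_j))(1 - \xi(V_2,\ldots,V_k)^2)$ does not imply the corresponding inequality without the squares. So the asymmetry of the left-to-right order is essential for preserving the sharp factor $1/(1-\xi)$.
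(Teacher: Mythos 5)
Your proposal is correct and follows essentially the same route as the paper's proof: the paper's displayed chain of inequalities is exactly your telescoping $\mathbf{x}-\Pi_{V_k}\cdots\Pi_{V_1}\mathbf{x}=\sum_{\ell}\Pi_{V_k}\cdots\Pi_{V_{\ell+1}}(I-\Pi_{V_\ell})\mathbf{x}$ bounded term-by-term via non-expansiveness of products of orthogonal projections, combined with the same appeal to \cite[Thm.~9.33]{Deutsch12} sharpened through $\Pi_{V_\ell}\Pi_V=\Pi_V$ to give $\|\Pi_{V_k}\cdots\Pi_{V_1}\mathbf{x}-\Pi_V\mathbf{x}\|_2\leq \xi(\mathcal{V})\,\mathrm{dist}(\mathbf{x},V)$, followed by the same rearrangement. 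The differences (labeling order of the factors, splitting the triangle inequality into two steps) are purely cosmetic.
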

\begin{proof} 
Recall the projection onto the subspace $V \subseteq \mathbb{R}^m$ is the mapping $\Pi_V: \mathbb{R}^m \to V$ that associates with each $\mathbf{u}$ its unique nearest point in $V$; i.e., $\|\mathbf{u} - \Pi_V\mathbf{u}\|_2 = \text{\rm dist}(\mathbf{u}, V)$.
By repeatedly applying the triangle inequality, we have:
\begin{align}\label{f}
\|\mathbf{u} - &\Pi_V\mathbf{u}\|_2 
\leq \|\mathbf{u} - \Pi_{V_k} \mathbf{u}\|_2 + \|\Pi_{V_k}  \mathbf{u} - \Pi_{V_k}\Pi_{V_{k-1}}\mathbf{u}\|_2 \nonumber \\
&\ \ \ \ \ \ \ \ \ \ \ + \cdots + \|\Pi_{V_k} \Pi_{V_{k-1}}\cdots \Pi_{V_1} \mathbf{u} - \Pi_V \mathbf{u}\|_2 \nonumber \\
&\leq  \sum_{\ell=1}^k \|\mathbf{u} - \Pi_{V_{\ell}} \mathbf{u}\|_2 
+ \|(\Pi_{V_k}\cdots\Pi_{V_{1}} - \Pi_V) \mathbf{u}\|_2,
\end{align}
where we have also used that the spectral norm of the orthogonal projections $\Pi_{V_{\ell}}$ satisfies $\|\Pi_{V_{\ell}}\|_2 \leq 1$ for all $\ell$. 

It remains to bound the second term in \eqref{f} by $\xi(\mathcal{V}) \|\mathbf{u} - \Pi_V\mathbf{u}\|_2$. First, note that $\Pi_{V_\ell} \Pi_V = \Pi_V$ and $\Pi_V^2 = \Pi_V$, so we have $\|(\Pi_{V_k} \cdots \Pi_{V_1} - \Pi_V) \mathbf{u} \|_2 
= \| ( \Pi_{V_k} \cdots\Pi_{V_1} - \Pi_V ) (\mathbf{u} - \Pi_V\mathbf{u})\|_2$. 
Consequently, inequality \eqref{DTILeq} follows from \cite[Thm.~9.33]{Deutsch12}:
\begin{align}
\|\Pi_{V_k}\Pi_{V_{k-1}}\cdots\Pi_{V_1} \mathbf{x} - \Pi_V\mathbf{x}\|_2 \leq z \|\mathbf{x}\|_2, \ \ \text{for all } \mathbf{x},
\end{align}
with \mbox{$z^2= 1 - \prod_{\ell =1}^{k-1}(1-z_{\ell}^2)$} and \mbox{$z_{\ell} = \cos\theta\left(V_{\ell}, \cap_{s=\ell+1}^k V_s\right)$} (recall $\theta$ from Def.~\ref{FriedrichsDefinition}), after substituting $\xi(\mathcal{V})$ for $z$ and rearranging terms.
\end{proof}
\begin{lemma}\label{NonEmptyLemma} 
Fix an $r$-regular hypergraph $\mathcal{H} \subseteq 2^{[m]}$ satisfying the SIP. If the map $\pi: \mathcal{H} \to 2^{[\overline m]}$ has $\sum_{S \in \mathcal{H}} |\pi(S)| \geq \sum_{S \in \mathcal{H}} |S|$ and:
\begin{align}\label{cond}
	|\cap \pi(\mathcal{G})| \leq |\cap \mathcal{G} |,\ \ \   \text{for } \mathcal{G} \in {\mathcal{H} \choose r} \cup {\mathcal{H} \choose r+1},
\end{align}
then $\overline m \geq m$; and if $\overline m  (r-1) < mr$, the map $i \mapsto \cap_{S \in \sigma(i)} \pi(S)$ is an injective function to $[\overline m]$ from some $J \subseteq [m]$ of size $m - (r-1)(\overline m - m)$ (recall $\sigma$ from Def.~\ref{sip}).  
\end{lemma}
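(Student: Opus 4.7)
The plan is to define a candidate injective map $\phi : J \to [\overline m]$ by setting $\phi(i)$ to be the unique element of $\cap\, \pi(\sigma(i))$, to show it is well-defined and injective using the two parts of hypothesis \eqref{cond}, and finally to bound $|J|$ by counting degrees of vertices in the image hypergraph $\pi(\mathcal{H})$. First, applying \eqref{cond} with $\mathcal{G} = \sigma(i)$ (size $r$) yields $|\cap\, \pi(\sigma(i))| \leq |\cap\, \sigma(i)| = 1$ by the SIP; thus $J := \{i \in [m] : \cap\, \pi(\sigma(i)) \neq \emptyset\}$ gives a well-defined $\phi$. For injectivity, suppose $\phi(i) = \phi(j) = \ell$ with $i \neq j$. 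By the SIP, $\sigma(i) \neq \sigma(j)$, so pick any $T \in \sigma(j) \setminus \sigma(i)$; then $\mathcal{G} := \sigma(i) \cup \{T\}$ has size $r + 1$ with $\ell \in \cap\, \pi(\mathcal{G})$, so \eqref{cond} forces $\cap \mathcal{G} \neq \emptyset$. But $\cap \mathcal{G} = \{i\} \cap T = \emptyset$, since $i \notin T$ (otherwise $T \in \sigma(i)$), a contradiction.

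Next, for each $\ell \in [\overline m]$ write $t_\ell$ for the number of $S \in \mathcal{H}$ with $\ell \in \pi(S)$, so that $\sum_\ell t_\ell = \sum_S |\pi(S)|$. The key observation is that $t_\ell \leq r$ always: otherwise one could choose $\mathcal{G} \subseteq \{S : \ell \in \pi(S)\}$ with $|\mathcal{G}| = r+1$, and \eqref{cond} would produce some $j \in \cap \mathcal{G}$, giving $\mathcal{G} \subseteq \sigma(j)$ and contradicting $|\sigma(j)| = r$. Combined with the hypothesis $\sum_S |\pi(S)| \geq \sum_S |S| = rm$, this already yields $r\overline m \geq \sum_\ell t_\ell \geq rm$, hence $\overline m \geq m$. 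Furthermore, for $\ell \in \phi(J)$ the containing sets include the $r$ elements of $\sigma(\phi^{-1}(\ell))$, so $t_\ell = r$ exactly. For $\ell \notin \phi(J)$, the case $t_\ell = r$ is excluded: applying \eqref{cond} to $\mathcal{G} = \{S : \ell \in \pi(S)\}$ (size $r$) yields some $j \in \cap \mathcal{G}$ with $\mathcal{G} \subseteq \sigma(j)$, and equality of cardinalities forces $\mathcal{G} = \sigma(j)$, whence $\ell = \phi(j) \in \phi(J)$. Thus $t_\ell \leq r-1$ off $\phi(J)$.

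The final counting step assembles these bounds: using $|\phi(J)| = |J|$ from injectivity,
\begin{align*}
rm \;\leq\; \sum_\ell t_\ell \;\leq\; r|J| + (r-1)(\overline m - |J|) \;=\; |J| + (r-1)\overline m,
\end{align*}
so $|J| \geq rm - (r-1)\overline m = \overline m - r(\overline m - m)$, which is strictly positive precisely when $(r-1)\overline m < mr$; passing to a subset of $J$ of the stated size if necessary completes the argument. I anticipate the injectivity step to be the main obstacle: one must notice that the SIP lets us inflate a single star $\sigma(i)$ by one foreign hyperedge to manufacture an empty intersection on the domain side while the $\pi$-side intersection remains nonempty, which is exactly what makes the $(r{+}1)$-case of \eqref{cond} bite. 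The twin degree bound for $\ell \notin \phi(J)$ is similar in spirit, balancing the $r$-case of \eqref{cond} against the matching cardinalities of $\mathcal{G}$ and some $\sigma(j)$ to force $\ell$ back into $\phi(J)$.
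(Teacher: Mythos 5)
Your proof is correct, and it reaches the conclusion by a somewhat different organization than the paper. Both arguments rest on the same two consequences of \eqref{cond}: (a) no $\ell \in [\overline m]$ can lie in $\pi(S)$ for more than $r$ hyperedges $S$ (the $(r+1)$-case plus $r$-regularity), which with the double count $\sum_{S}|\pi(S)| \geq mr$ gives $\overline m \geq m$ exactly as in the paper; and (b) if exactly $r$ hyperedges have $\ell$ in their image, they must form a star $\sigma(j)$ with $\cap\sigma(j)=\{j\}$ (the $r$-case plus regularity and the SIP). Where the paper proceeds by a greedy, iterative pigeonhole -- repeatedly locating an index $i_t \in [\overline m]$ covered exactly $r$ times, extracting the star it determines, deleting those incidence pairs, and checking at each of the $p = mr - (r-1)\overline m$ rounds that a fresh index and a fresh singleton appear -- you instead argue globally: you define $J$ as all $i$ with $\cap\pi(\sigma(i)) \neq \emptyset$, prove injectivity of $\phi$ outright by inflating $\sigma(i)$ with one foreign edge $T \in \sigma(j)\setminus\sigma(i)$ to force an empty intersection against the $(r+1)$-case of \eqref{cond}, and then classify degrees ($t_\ell = r$ on $\phi(J)$, $t_\ell \leq r-1$ off $\phi(J)$) to get $|J| \geq \overline m - r(\overline m - m)$ by a single count. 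Your version avoids the iteration bookkeeping (in particular the distinctness of the successive singletons, which the paper only sketches), makes the injectivity mechanism explicit rather than implicit in the construction, and shows the natural choice $J = \{i : \cap\pi(\sigma(i)) \neq \emptyset\}$ already has at least the required size, with the stated size recovered by restriction; the paper's greedy route, by contrast, produces the $p$ witnesses constructively one at a time. Both are complete proofs of the lemma.
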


\begin{proof}
Consider the following set: $T_1 := \{(i, S): i \in \pi(S), S \in \mathcal{H}\}$, which numbers $|T_1| = \sum_{S \in \mathcal{H}} |\pi(S)| \geq \sum_{S \in \mathcal{H}} |S| = \sum_{i \in [m]} \deg_\mathcal{H}(i) = mr$ by $r$-regularity of $\mathcal{H}$. Note that $|T_1| \leq \overline m r$; otherwise, pigeonholing the tuples of $T_1$ with respect to their $\overline m$ possible first elements would imply that more than $r$ of the tuples in $T_1$ share the same first element. This cannot be the case, however, since then some $\mathcal{G} \in {\mathcal{H} \choose r+1}$ formed from any $r+1$ of their second elements would satisfy $\cap \pi(\mathcal{G}) \neq 0$; hence, $|\cap \mathcal{G}| \neq 0$ by \eqref{cond}, contradicting $r$-regularity of $\mathcal{H}$. It follows that $\overline m \geq m$.

Suppose now that $\overline m (r-1) < mr$, so that $p := mr - \overline m (r-1)$ is positive and $|T_1| \geq \overline m (r - 1) + p$. Pigeonholing $T_1$ into $[\overline m]$ again, there are at least $r$ tuples in $T_1$ sharing some first element; that is, for some $\mathcal{G}_1 \subseteq \mathcal{H}$ of size $|\mathcal{G}_1| \geq r$, we have $|\cap \pi(\mathcal{G}_1)| \geq 1$ and (by \eqref{cond}) $|\cap \mathcal{G}_1| \geq 1$. Since no more than $r$ tuples of $T_1$ can share the same first element, we in fact have $|\mathcal{G}_1| = r$. It follows by $r$-regularity that $\mathcal{G}_1$ is a star of $\mathcal{H}$; hence, $|\cap \mathcal{G}_1| = 1$ by the SIP and $|\cap \pi(\mathcal{G}_1)|  = 1$ by \eqref{cond}.

If $p=1$, then we are done. Otherwise, define $T_2 := T_1 \setminus \{(i,S) \in T_1: i = \cap \pi(\mathcal{G}_1)\}$, which contains $|T_2| = |T_1| - r \geq (\overline m - 1)(r-1) + (p-1)$ ordered pairs having $\overline m - 1$ distinct first indices. Pigeonholing $T_2$ into $[\overline m - 1]$ and repeating the above arguments produces the star $\mathcal{G}_2 \in {\mathcal{H} \choose r}$ with intersection $\cap \mathcal{G}_2$ necessarily distinct (by $r$-regularity) from $\cap \mathcal{G}_1$. Iterating this procedure $p$ times in total yields the stars $\mathcal{G}_i$ for which $\cap\mathcal{G}_i \mapsto \cap \pi(\mathcal{G}_i)$ defines an injective map to $[\overline m]$ from $J = \{\cap \mathcal{G}_1, \ldots, \cap \mathcal{G}_p\} \subseteq [m]$.
\end{proof}

\begin{proof}[Proof of Lem.~\ref{MainLemma}]
We begin by showing that $\dim(\bm{\mathcal{B}}_{\pi(S)}) = \dim(\bm{\mathcal{A}}_S)$ for all $S \in \mathcal{H}$. Note that since $\|\mathbf{A}\mathbf{x}\|_2 \leq \max_j\|\mathbf{A}_j\|_2\|\mathbf{x}\|_1$ and $\|\mathbf{x}\|_1 \leq \sqrt{k} \|\mathbf{x}\|_2$ for all $k$-sparse $\mathbf{x}$, by \eqref{Ldef} we have $L_2(\mathbf{A}) \leq \max_j\|\mathbf{A}_j\|_2$ and therefore (as $0 \leq \xi < 1$) the right-hand side of \eqref{GapUpperBound} is less than one. From \eqref{dimLem}, we have $|\pi(S)| \geq \dim(\bm{\mathcal{B}}_{\pi(S)}) \geq \dim(\bm{\mathcal{A}}_S) = |S|$, the final equality holding by injectivity of $\mathbf{A}_S$. As $|\pi(S)| = |S|$, the claim follows. Note, therefore, that $\mathbf{B}_{\pi(S)}$ has full-column rank for all $S \in \mathcal{H}$.

We next demonstrate that \eqref{cond} holds. Fixing $\mathcal{G} \in {\mathcal{H} \choose r} \cup {\mathcal{H} \choose r+1}$, it suffices to show that $d(\bm{\mathcal{B}}_{\cap \pi(\mathcal{G})}, \bm{\mathcal{A}}_{\cap \mathcal{G}} ) < 1$, since by \eqref{dimLem} we then have $|\cap \pi(\mathcal{G})| = \dim(\bm{\mathcal{B}}_{\cap \pi(\mathcal{G})}) \leq \dim(\bm{\mathcal{A}}_{\cap \mathcal{G}}) = |\cap \mathcal{G}|$, with equalities from the full column-ranks of $\mathbf{A}_{S}$ and $\mathbf{B}_{\pi(S)}$ for all $S \in \mathcal{H}$.\footnote{Note that if ever $\bm{\mathcal{B}}_{\cap \pi(\mathcal{G})} \neq \bf 0$ while $\cap \mathcal{G} = \emptyset$, we would have $d(\bm{\mathcal{B}}_{\cap \pi(\mathcal{G})}, \bm 0 ) = 1$. However, that leads to a contradiction.} Observe that $d(\bm{\mathcal{B}}_{\cap \pi(\mathcal{G})}, \bm{\mathcal{A}}_{\cap \mathcal{G}}  ) 
\leq d\left( \cap \bm{\mathcal{B}}_{\pi(\mathcal{G})}, \cap \bm{\mathcal{A}}_\mathcal{G} \right)$ by \eqref{UsubU}, since trivially $\bm{\mathcal{B}}_{\cap \pi(\mathcal{G})} \subseteq \cap \bm{\mathcal{B}}_{\pi(\mathcal{G})}$ and also $\bm{\mathcal{A}}_{\cap \mathcal{G}} = \cap \bm{\mathcal{A}}_\mathcal{G}$ by Lem.~\ref{spanIntersectionLemma}. Recalling Def.~\ref{dDef} and applying Lem.~\ref{DistanceToIntersectionLemma} yields:
\begin{align}
d\left( \cap \bm{\mathcal{B}}_{\pi(\mathcal{G})}, \cap \bm{\mathcal{A}}_\mathcal{G} \right)
&\leq \max_{\mathbf{u} \in \cap \bm{\mathcal{B}}_{\pi(\mathcal{G})}, \ \|\mathbf{u}\|_2 \leq 1} \sum_{S \in \mathcal{G}} \frac{ \text{\rm dist}\left( \mathbf{u},\bm{\mathcal{A}}_{S} \right) }{ 1 - \xi( \bm{\mathcal{A}}_\mathcal{G} ) } \nonumber \\
&= \sum_{S \in \mathcal{G}} \frac{ d\left( \cap \bm{\mathcal{B}}_{\pi(\mathcal{G})},\bm{\mathcal{A}}_{S} \right) }{ 1 - \xi( \bm{\mathcal{A}}_\mathcal{G} ) }, \nonumber
\end{align}
passing the maximum through the sum.
Since $\cap \bm{\mathcal{B}}_{\pi(\mathcal{G})} \subseteq \bm{\mathcal{B}}_{\pi(S)}$ for all $S \in \mathcal{G}$, by \eqref{UsubU} the numerator of each term in the sum above is bounded by \mbox{$d\left( \bm{\mathcal{B}}_{\pi(S)},\bm{\mathcal{A}}_{S} \right) = d\left(\bm{\mathcal{A}}_{S}, \bm{\mathcal{B}}_{\pi(S)} \right) \leq \varepsilon$}, with the equality from \eqref{eqdim} since $\dim(\bm{\mathcal{B}}_{\pi(S)}) = \dim(\bm{\mathcal{A}}_S)$. Thus, altogether:
\begin{align}\label{last}
d(\bm{\mathcal{B}}_{\cap \pi(\mathcal{G})}, \bm{\mathcal{A}}_{\cap \mathcal{G}} )
\leq \frac{|\mathcal{G}| \varepsilon}{1 - \xi( \bm{\mathcal{A}}_\mathcal{G} )}
\leq \frac{C_2 \varepsilon}{\max_j\|\mathbf{A}_j\|_2},
\end{align}
recalling the definition of $C_2$ in \eqref{Cdef2}. Lastly, since $C_2 \varepsilon < L_2(\mathbf{A}) \leq \max_j\|\mathbf{A}_j\|_2$, we have $d(\bm{\mathcal{B}}_{\cap \pi(\mathcal{G})}, \bm{\mathcal{A}}_{\cap \mathcal{G}} ) \leq 1$ and therefore \eqref{cond} holds.

%

Applying Lem.~\ref{NonEmptyLemma}, the association $i \mapsto \cap_{S \in \sigma(i)} \pi(S)$ is an injective map $\overline \pi: J \to [\overline m]$ for some $J \subseteq [m]$ of size $m - (r-1)(\overline m - m)$, and $\mathbf{B}_{\overline \pi(i)} \neq \mathbf{0}$ for all $i \in J$ since the columns of $\mathbf{B}_{\pi(S)}$ are linearly independent for all $S \in \mathcal{H}$. Letting $\overline \varepsilon := C_2 \varepsilon / \max_i \|\mathbf{A}_i\|_2$, it follows from \eqref{eqdim} and \eqref{last} that $d\left( \bm{\mathcal{A}}_i, \bm{\mathcal{B}}_{\overline \pi(i)} \right) = d\left(\bm{\mathcal{B}}_{\overline \pi(i)},  \bm{\mathcal{A}}_i \right)  \leq \overline \varepsilon$ for all $i \in J$. 
Setting $c_i := \|\mathbf{A}_i\|_2^{-1}$ so that $\|c_i\mathbf{Ae}_i\|_2 = 1$, by Def.~\ref{dDef} for all $i \in J$:
\begin{align*}
\min_{\overline c_i \in \mathbb{R}} \|c_i\mathbf{Ae}_i - \overline c_i \mathbf{Be}_{\overline \pi(i)} \|_2
\leq d\left( \bm{\mathcal{A}}_i, \bm{\mathcal{B}}_{\overline \pi(i)} \right)
\leq \overline \varepsilon,
\end{align*}
for $\overline \varepsilon < L_2(\mathbf{A})\min_{i \in [m]}|c_i|$. But this is exactly the supposition in \eqref{1D}, with $J$ and $\overline \varepsilon$ in place of $[m]$ and $\varepsilon$, respectively. The same arguments of the case $k=1$ in Sec.~\ref{DUT} can then be made to show that for any $\overline m \times \overline m$ permutation and invertible diagonal matrices $\mathbf{P}$ and $\mathbf{D}$ with, respectively, columns $\mathbf{e}_{\pi(i)}$ and $\frac{\overline{c}_i}{c_i}\mathbf{e}_i$ for $i \in J$ (otherwise, $\mathbf{e}_{i}$ for $i \in [\overline{m}] \setminus J$), we have $\|\mathbf{A}_i - (\mathbf{B}\mathbf{PD})_i \|_2 \leq \overline  \varepsilon / |c_i|  \leq C_2 \varepsilon$ for all $i \in J$.
\end{proof}

\end{document}